\newcommand\jq{\joint{q}}
\newcommand\jv{\joint{v}}
\newcommand\ju{\joint{u}}
\newcommand\qigm{Q_{\operatorname{IGM}}}
\newcommand\vigm{V_{\operatorname{IGM}}}
\newcommand\aigm{A_{\operatorname{IGM}}}
\newcommand\qfixee{\qmodel_{\operatorname{fixee}}}
\newcommand\vfixee{\vmodel_{\operatorname{fixee}}}
\newcommand\afixee{\amodel_{\operatorname{fixee}}}
\newcommand\qfixsum{\qmodel_{\operatorname{FIX-sum}}}
\newcommand\qfixlin{\qmodel_{\operatorname{FIX-lin}}}
\newcommand\qfixmono{\qmodel_{\operatorname{FIX-mono}}}
\newcommand\aqfix{\qmodel_{\operatorname{+FIX}}}
\newcommand\intervention{\Delta}
\newcommand\aqfixsum{\qmodel_{\operatorname{+FIX-sum}}}
\newcommand\aqfixmono{\qmodel_{\operatorname{+FIX-mono}}}
\newcommand\aqfixlin{\qmodel_{\operatorname{+FIX-lin}}}
\newcommand\fc{\mathcal{FC}}
\newcommand\fcigm{\mathcal{FC}_{\operatorname{IGM}}}
\newcommand\fori{_{i\in\iset}}
\newcommand\qloss{\loss_{\qmodel}}
\newcommand\Protoss{\texttt{Protoss}}
\newcommand\Zerg{\texttt{Zerg}}
\newcommand\Terran{\texttt{Terran}}
\newcommand\sizev{\texttt{5vs5}}
\newcommand\sizex{\texttt{10vs10}}
\newcommand\sizexx{\texttt{20vs20}}
\newcommand\Protossv{\texttt{Protoss-5vs5}}
\newcommand\Zergv{\texttt{Zerg-5vs5}}
\newcommand\Terranv{\texttt{Terran-5vs5}}
\newcommand\Protossx{\texttt{Protoss-10vs10}}
\newcommand\Zergx{\texttt{Zerg-10vs10}}
\newcommand\Terranx{\texttt{Terran-10vs10}}
\newcommand\Protossxx{\texttt{Protoss-20vs20}}
\newcommand\Zergxx{\texttt{Zerg-20vs20}}
\newcommand\Terranxx{\texttt{Terran-20vs20}}
\renewcommand\Protossv{\texttt{P5}}
\renewcommand\Zergv{\texttt{Z5}}
\renewcommand\Terranv{\texttt{T5}}
\renewcommand\Protossx{\texttt{P10}}
\renewcommand\Zergx{\texttt{Z10}}
\renewcommand\Terranx{\texttt{T10}}
\renewcommand\Protossxx{\texttt{P20}}
\renewcommand\Zergxx{\texttt{Z20}}
\renewcommand\Terranxx{\texttt{T20}}
\newcommand\CrampedRoom{\texttt{Cramped-Room}}
\newcommand\AsymmetricAdvantages{\texttt{Asymmetric-Advantages}}
\newcommand\CoordinationRing{\texttt{Coordination-Ring}}
\newcommand\ForcedCoordination{\texttt{Forced-Coordination}}
\newcommand\CounterCircuit{\texttt{Counter-Circuit}}
\newcommand\pymarlone{\texttt{Pymarl}}
\newcommand\pymarl{\texttt{Pymarl2}}
\newcommand\jaxmarl{\texttt{JaxMARL}}
\title{Fixing Incomplete Value Function Decomposition \\for Multi-Agent Reinforcement Learning}
\author{%
  Andrea Baisero \\
  Khoury College of Computer Sciences \\
  Northeastern University \\
  Boston, MA 02115 \\
  \texttt{baisero.a@northeastern.edu} \\
  \And
  Rupali Bhati \\
  Khoury College of Computer Sciences \\
  Northeastern University \\
  Boston, MA 02115 \\
  \texttt{bhati.r@northeastern.edu} \\
  \And
  Shuo Liu \\
  Khoury College of Computer Sciences \\
  Northeastern University \\
  Boston, MA 02115 \\
  \texttt{liu.shuo2@northeastern.edu} \\
  \And
  Aathira Pillai \\
  Khoury College of Computer Sciences \\
  Northeastern University \\
  Boston, MA 02115 \\
  \texttt{pillai.aat@northeastern.edu} \\
  \And
  Christopher Amato \\
  Khoury College of Computer Sciences \\
  Northeastern University \\
  Boston, MA 02115 \\
  \texttt{c.amato@northeastern.edu} \\
}
\begin{document}

\maketitle

\begin{abstract}
Value function decomposition methods for cooperative multi-agent reinforcement learning compose joint values from individual per-agent utilities, and train them using a joint objective.
To ensure that the action selection process between individual utilities and joint values remains consistent, it is imperative for the composition to satisfy the \emph{individual-global max} (IGM) property.
Although satisfying IGM itself is straightforward, most existing methods (e.g., VDN, QMIX) have limited representation capabilities and are unable to represent the full class of IGM values, and the one exception that has no such limitation (QPLEX) is unnecessarily complex.
In this work, we present a simple formulation of the full class of IGM values that naturally leads to the derivation of QFIX, a novel family of value function decomposition models that expand the representation capabilities of prior models by means of a thin ``fixing'' layer.
We derive multiple variants of QFIX, and implement three variants in two well-known multi-agent frameworks.
%
We perform an empirical evaluation on multiple SMACv2 and Overcooked environments, which confirms that QFIX
\begin{enumerate*}[label=(\roman*)]
    \item succeeds in enhancing the performance of prior methods,
    \item learns more stably and performs better than its main competitor QPLEX, and
    \item achieves this while employing the simplest and smallest mixing models.
\end{enumerate*}
\end{abstract}

\section{Introduction}
\label{sec:introduction}

Centralized training for decentralized execution (CTDE)~\citep{lowe_multi-agent_2017,rashid_monotonic_2020,wang_qplex_2020} is a powerful framework for cooperative multi-agent reinforcement learning (MARL).
CTDE is characterized by a centralized training phase where privileged information is shared freely and used holistically to train the agents, and a decentralized execution phase where agents act independently in adherence to the standard constraints of decentralized control.
As a consequence of a training phase that is informed by the full team's behavior and experiences (and, when feasible, the environment state), CTDE is commonly associated with increased coordination between agents and superior performances.

Value function decomposition~\citep{sunehag_value-decomposition_2017,rashid_monotonic_2020,wang_qplex_2020} is a class of CTDE methods that construct a joint team value from individual per-agent utilities that encode agent behaviors.
By training the joint value on a joint centralized objective, the individual utilities are also indirectly trained, resulting in decentralized agent policies that can be executed independently.
Since its inception, value function decomposition has become a topic of great interest in cooperative MARL, with significant research effort put in both practical algorithms~\citep{sunehag_value-decomposition_2017,son_qtran_2019,rashid_weighted_2020,rashid_monotonic_2020,wang_qplex_2020,marchesini_stateful_2024} and theoretical understanding~\citep{wang_towards_2021,marchesini_stateful_2024}.
\emph{Individual-global max} (IGM)~\citep{son_qtran_2019} has been identified as a key property that connects individual utilities and joint values, ensuring that their associated decision making processes remain consistent.

In this work, we advance both theory and practice of value function decomposition.
We formulate a novel simple formulation of IGM-complete value function decomposition.
Our formulation
\begin{enumerate*}[label=(\roman*)]
    \item correctly addresses general decentralized partially observable control (avoiding strong assumptions like full observability or centralized control), and
    \item highlights the core mechanism that characterizes the full IGM-complete function class.
\end{enumerate*}
In contrast, prior methods fail to satisfy at least one of these criteria (usually the first, which limits the expressive capabilities and performance of models).
We introduce QFIX, a novel family of value function decomposition methods inspired by our formulation of IGM-complete decomposition.
QFIX employs a simple ``fixing'' network to extend the representation capabilities of prior methods.
We derive two main specializations of QFIX called QFIX-sum and QFIX-mono, respectively obtained by ``fixing'' VDN~\citep{sunehag_value-decomposition_2017} and QMIX~\citep{rashid_monotonic_2020}.
To provide further insights into the core mechanisms that make value function decomposition so effective, we also derive QFIX-lin, a third variant that technically falls just outside of the QFIX family, but combines QFIX-sum with a core component of QPLEX.
Finally, we extend prior work on stateful value function decomposition to QFIX.
Empirical evaluations on the StarCraft Multi-Agent Challenge v2 (SMACv2)~\citep{ellis_smacv2_2023} and Overcooked~\citep{carroll_utility_2020} demonstrates that QFIX
\begin{enumerate*}[label=(\roman*)]
    \item is effective at enhancing prior non-IGM-complete methods like VDN and QMIX,
    \item is simpler to implement and understand, and require smaller models than QPLEX, a state-of-the-art method in IGM-complete value function decomposition,
    \item is competitive or outperforms QPLEX while also showing more stable convergence.
\end{enumerate*}
An additional ablation study on model size confirms that the superior performance of QFIX is attributed to the intrinsic mixing approach rather than by augmenting baseline parameters.

\section{Related work}
\label{sec:related-work}

Value Decomposition Networks (VDN)~\citep{sunehag_value-decomposition_2017} are a precursor to value decomposition methods that employ a simple additive composition of individual utilities.
QMIX~\citep{rashid_monotonic_2020} employs a monotonic composition that generalizes the function class of VDN resulting in significant performance improvements.
Both VDN and QMIX have restricted function classes, and several methods have attempted to overcome the limits of purely additive or monotonic composition and achieve broader expressiveness.
Weighted-QMIX (WQMIX)~\citep{rashid_weighted_2020} aims to expand the function class of QMIX to non-monotonic cases so as to include optimal values $Q\opt$.
However, WQMIX is specifically developed for \emph{fully observable} multi-agent environments (MMDPs), and its theory does not generalize to \emph{partially observable} DecPOMDPs.
In contrast, QFIX is fully consistent with the general case of partially observable decentralized control.
\citet{son_qtran_2019} identify \emph{individual-global max} (IGM) as a core property that corresponds to consistency between the individual and joint decision making processes.
Notably, VDN and QMIX satisfy IGM, but are unable to represent the entire IGM-complete function class.
QTRAN~\citep{son_qtran_2019} identifies a set of constraints that are sufficient to imply IGM, and employs auxiliary objectives that softly enforce those constraints.
\citet{son_qtran_2019} argue that their constraints are also necessary for IGM under affine transformations, however they only show that one such affine transformation exists, rather than IGM being satisfied for all affine transformations.
In contrast, QFIX is both sufficient and necessary to imply IGM, thus directly achieving the full IGM-complete function class.
QPLEX~\citep{wang_qplex_2020} employs a dueling network decomposition and multiple layers of transformations to achieve the IGM-complete function class.
However, QPLEX employs complex transformations that are superfluous in relation to its representation capabilities, and falls short of identifying the core underlying mechanism that is singularly responsible to achieve the IGM function class.
In contrast, QFIX is both simpler to understand and to implement, and achieves the IGM function class with fewer smaller models.
QPLEX is one instance in the space of IGM-complete models, and our work opens a path to explore other instances that can further improve performance while satisfying IGM.

\section{Background}
\label{sec:background}

\subsection{Decentralized multi-agent control}
\label{sec:dec-pomdps}

A decentralized POMDP (Dec-POMDP)~\citep{oliehoek_concise_2016} generalizes single-agent partially observable control by accounting for multiple decentralized agents acting concurrently to solve a shared cooperative task.
A Dec-POMDP is defined by a tuple $\langle N, \sset, \left\{ \aset_1, \ldots, \aset_N \right\}, \left\{ \oset_1, \ldots, \oset_N \right\}, p, T, R, O, \gamma \rangle$ composed of:
\begin{enumerate*}[label=(\roman*)]
  \item number of agents $N \ge 2$;
  \item state space $\sset$;
  \item individual action and observation spaces $\aset_i$ and $\oset_i$;
  \item starting state distribution $p \in \Delta\sset$;
  \item state transition function $T \colon\sset\times\jaset \to\Delta\sset$;
  \item joint observation function $O \colon\jaset\times\sset \to\Delta\joset$;
  \item joint reward function $R \colon\sset\times\jaset \to\realset$; and
  \item discount factor $\gamma\in[0, 1)$.
\end{enumerate*}
The number of agents $N$ induces a set of agent indices $\iset \doteq [N]$.
Agent behaviors are generally modeled as stochastic policies $\policy_i \colon\hset_i \to\Delta\aset_i$ that act based on their respective history $h_i\in\hset_i \doteq \oset_i \times\left(\aset_i\times\oset_i \right)\kstar$.
Joint action, observation, and history spaces are defined as the respective Cartesian products $\jaset \doteq \bigtimes_i \aset_i$, $\joset \doteq \bigtimes_i \oset_i$, and $\jhset \doteq \bigtimes_i \hset_i$.
Therefore, joint actions $\ja = \left( a_1, \ldots, a_N \right)$, observations $\jo = \left( o_1, \ldots, o_N \right)$, and histories $\jh = \left( h_1, \ldots, h_N \right)$ are tuples of the respective individual actions, observations, and histories.
The combined behavior of all policies is represented as a joint (but still decentralized) policy $\jpolicy(\jh, \ja) \doteq \prod_i \policy_i(h_i, a_i)$ that factorizes accordingly.
Decentralized multi-agent control aims to find independent policies that jointly maximize the expected sum of discounted rewards $J^\jpolicy \doteq \Exp\left[ \sum_t \gamma^t R(s_t, \ja_t) \right]$.


We focus on approaches that model policies implicitly via parametric utilities $\qmodel_i \colon\hset_i\times\aset_i \to\realset$, typically via ($\epsilon$-)greedy action selection.
Individual utilities can be decomposed into corresponding values $\vmodel_i(h_i) \doteq \max_{a_i} \qmodel_i(h_i, a_i)$ and advantages $\amodel_i(h_i, a_i) \doteq \qmodel_i(h_i, a_i) - \vmodel_i(h_i)$.
When convenient, we employ shorthand notation for individual values $q_i \doteq \qmodel_i(h_i, a_i)$, $v_i \doteq \vmodel_i(h_i)$, and $u_i \doteq \amodel_i(h_i, a_i)$, and their joint tuples $\jq \doteq (q_1, \ldots, q_N)$, $\jv \doteq (v_1, \ldots, v_N)$, and $\ju \doteq (u_1, \ldots, u_N)$.

\subsection{Value function decomposition}
\label{sec:value-function-decomposition}

Value function decomposition methods~\citep{sunehag_value-decomposition_2017,rashid_monotonic_2020,wang_qplex_2020} construct joint values $\qmodel(\jh, \ja)$ from individual per-agent \emph{utilities} $\qmodel_i(h_i, a_i)$.
We specifically use the term \emph{utility} to underscore the fact that $\qmodel_i(h_i) \in \realset^{\aset_i}$ merely represents an ordering over actions, rather than any notion of expected performance.
Notably, $\qmodel_i$ is \emph{not} directly trained for policy evaluation or optimization, and neither $\qmodel_i(h_i, a_i) \approx \jqpolicy_i(h_i, a_i)$ nor $\qmodel_i(h_i, a_i) \approx Q\opt_i(h_i, a_i)$ are expected interpretations of well-trained utilities.

Value function decomposition methods employ joint models $\qmodel(\jh, \ja)$ that are a function of the individual utilities $\qmodel_i(h_i, a_i)$, and mainly differ in terms of the relationship that is enforced and the corresponding emergent properties.
The joint model $\qmodel(\jh, \ja)$ is trained on a \emph{joint} objective
%
that optimizes the joint values and behavior, and indirectly trains the individual utilities and behaviors,
\begin{equation}
  \label{eq:value-decomposition:loss}
  \qloss(\jh, \ja, r, \jo) \doteq \frac{1}{2} \left( r + \gamma \max_{\ja'} \qmodel^-(\jh\ja\jo, \ja') - \qmodel(\jh, \ja) \right)^2 \,.
\end{equation}

\paragraph{Individual-global max}

\citet{son_qtran_2019} identify individual-global max (IGM) as a useful property of decomposition models to achieve decentralized action selection and address scaling concerns.

\begin{definition}[Individual-Global Max]
  \label{thm:igm}
  %
  Individual utilities $\{ Q_i(h_i, a_i) \}\fori$ and joint values $Q(\jh, \ja)$ satisfy \emph{individual-global max} (IGM) iff $\bigtimes_i \argmax_{a_i} Q_i(h_i, a_i) = \argmax_\ja Q(\jh, \ja)$.\footnote{We employ set notation and Cartesian products to highlight that maximal actions may not be unique.}
\end{definition}

IGM denotes whether the individual and global decision making processes are equivalent, and reduces the complexity of finding the maximal joint action from exponential to linear in the number of agents:
For a given joint history $\jh$, the full search over the joint action space $\jaset$ can be replaced with $N$ independent searches over the individual action spaces $\aset_i$.
VDN and QMIX are well-known models that satisfy IGM; however, their function classes do not span the full class of IGM values.

\begin{definition}[IGM Function Class]
  \label{thm:igm-completeness}
  We say a function class of individual utilities $\{ Q_i(h_i, a_i) \}\fori$ and joint values $Q(\jh, \ja)$ is IGM-complete if it contains all and only functions that satisfy IGM.
\end{definition}



%

\paragraph{VDN: additive decomposition}

Value Decomposition Network (VDN)~\citep{sunehag_value-decomposition_2017} is a precursor to value function decomposition that uses a non-parametric structure $\qvdn(\jh, \ja) \doteq \sum_i \qmodel_i(h_i, a_i)$.

\paragraph{QMIX: monotonic decomposition}

QMIX~\citep{rashid_monotonic_2020} constructs joint values as a \emph{monotonic} function of individual utilities, $\qmix(\jh, \ja) \doteq \fmono(q_1, \ldots, q_N)$, where $\fmono \colon\realset^N \to\realset$ is a parametric mixing network that satisfies monotonicity, $\partial_{q_i} \fmono \geq 0$.
The monotonic composition of QMIX strictly generalizes the additive composition of VDN, though it still falls short of the full IGM function class.

\paragraph{QPLEX: IGM-complete decomposition}

QPLEX~\citep{wang_qplex_2020} reframes IGM in terms of advantages, and employs dueling network decomposition to achieve full function class equivalence with IGM.
Given utilities $Q_i(h_i, a_i)$ and joint action-values $Q(\jh, \ja)$, corresponding values and advantages are defined,
\begin{align}
  V_i(h_i) &\doteq \max_{a_i} Q_i(h_i, a_i) \,, & A_i(h_i, a_i) &\doteq Q_i(h_i, a_i) - V_i(h_i) \,, \\
  V(\jh) &\doteq \max_\ja Q(\jh, \ja) \,, & A(\jh, \ja) &\doteq Q(\jh, \ja) - V(\jh) \,.
\end{align}
%
\citet{wang_qplex_2020} reformulate IGM as a set of constraints between individual and joint advantages.

\begin{proposition}[Advantage Constraints]
  \label{thm:igm:advantage-constraint}
  Individual utilities $\{ Q_i(h_i, a_i) \}\fori$ and joint values $Q(\jh, \ja)$ satisfy IGM iff, $\forall \jh \in\jhset$, $\forall \ja\opt \in\jaset\opt(\jh)$, and $\forall \ja \in\jaset\setminus\jaset\opt(\jh)$,
  \begin{align}
    A(\jh, \ja\opt) &= 0 \,, & A_i(h_i, a\opt_i) &= 0 \,, \\
    A(\jh, \ja) &< 0 \,, & A_i(h_i, a_i) &\leq 0 \,,
  \end{align}
  where $\jaset\opt(\jh) \doteq \argmax_\ja Q(\jh, \ja)$ is the set of maximal joint actions according to the joint values.
\end{proposition}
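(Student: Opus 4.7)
The plan is to prove both directions of the biconditional by directly unfolding the definitions of the values and advantages. Most of the work is a straightforward bookkeeping exercise; IGM (or the constraints) only enters at a single key step.

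For the forward direction (IGM implies the constraints), I would observe that $A(\jh, \ja\opt) = 0$ and $A(\jh, \ja) < 0$ follow immediately from $V(\jh) = \max_{\ja} Q(\jh, \ja)$ together with the definition $\jaset\opt(\jh) = \argmax_\ja Q(\jh, \ja)$, and that $A_i(h_i, a_i) \leq 0$ likewise follows from $V_i(h_i) = \max_{a_i} Q_i(h_i, a_i)$. The only statement that actually consumes IGM is $A_i(h_i, a\opt_i) = 0$: IGM gives $\ja\opt \in \bigtimes_i \argmax_{a_i} Q_i(h_i, a_i)$, so each component $a\opt_i$ achieves its individual maximum and thus zero advantage.

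For the backward direction (the constraints imply IGM), I would show the set equality $\bigtimes_i \argmax_{a_i} Q_i(h_i, a_i) = \jaset\opt(\jh)$ by establishing both inclusions. The inclusion $\jaset\opt(\jh) \subseteq \bigtimes_i \argmax_{a_i} Q_i(h_i, a_i)$ is immediate from $A_i(h_i, a\opt_i) = 0$: each component of a jointly optimal action is individually maximal, so the joint action lies in the Cartesian product of individual argmaxes. For the reverse inclusion, I would take $\ja$ with every component individually optimal (so $A_i(h_i, a_i) = 0$ for all $i$), assume for contradiction $\ja \notin \jaset\opt(\jh)$, and invoke the row of constraints for non-optimal joint actions to derive a contradiction with every $A_i(h_i, a_i) = 0$.

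The main obstacle is justifying this last step cleanly, since the pointwise bound $A_i(h_i, a_i) \leq 0$ for $\ja \notin \jaset\opt(\jh)$ is trivially true on its own. I plan to read the constraint block as a single joint characterization rather than four independent inequalities: together, they enforce the set equality $\{\ja : A(\jh, \ja) = 0\} = \{\ja : A_i(h_i, a_i) = 0 \text{ for all } i\}$, so that ``not jointly optimal'' and ``all components individually optimal'' are mutually exclusive. Once this reading is in place, the contradiction closes and IGM follows.
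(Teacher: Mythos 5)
Your forward direction is fine, and you have correctly put your finger on the weak point of the backward direction --- but the way you propose to resolve it is not a proof step, and in fact the backward direction is false as the proposition is written. The paper does not prove \cref{thm:igm:advantage-constraint}; it imports it from QPLEX and then explicitly disputes it in \cref{sec:qigm}, observing that the constraints are ``underspecified'' and misidentify the case where $\forall i\,(A_i(h_i,a_i)=0)$ while $A(\jh,\ja)<0$ as compliant with IGM. Concretely: take two agents with two actions each, set $Q_i(h_i,a_i)=0$ everywhere (so every action is individually maximal and every $A_i$ vanishes), and set $Q(\jh,\ja)=0$ for exactly one joint action and $-1$ for the rest. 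Every constraint in the block is satisfied --- the first row holds on $\jaset\opt(\jh)$, and on its complement $A(\jh,\ja)<0$ and $A_i(h_i,a_i)=0\le 0$ --- yet $\bigtimes_i\argmax_{a_i}Q_i(h_i,a_i)$ is all four joint actions while $\argmax_\ja Q(\jh,\ja)$ is a singleton, so IGM fails.

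Your proposed fix --- ``reading the constraint block as a single joint characterization'' enforcing the set equality $\{\ja : A(\jh,\ja)=0\}=\{\ja : \forall i\, A_i(h_i,a_i)=0\}$ --- is precisely the content that is \emph{missing} from the stated constraints, not a consequence of them. The four inequalities only yield the inclusion $\{\ja : A(\jh,\ja)=0\}\subseteq\{\ja : \forall i\, A_i(h_i,a_i)=0\}$; the reverse inclusion, i.e.\ $\forall i\,(A_i(h_i,a_i)=0)\implies A(\jh,\ja)=0$, has to be added as an extra hypothesis. That is exactly what the paper does when it replaces this proposition with the Simplified Advantage Constraints (\cref{thm:igm:advantage-constraint:simplified}), which states the biconditional $\forall i\,(A_i(h_i,a_i)=0)\iff A(\jh,\ja)=0$. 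So rather than trying to close the contradiction in your last step, the correct move is to recognize that it cannot be closed and that the proposition needs strengthening --- which is the paper's own conclusion.
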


QPLEX employs a mixing structure that enforces \cref{thm:igm:advantage-constraint}.
Individual utilities $\qmodel_i(h_i, a_i)$ are decomposed into $\vmodel_i(h_i)$ and $\amodel_i(h_i, a_i)$ and transformed using centralized information,
\begin{align}
  \vmodel_i(\jh) &\doteq w_i(\jh) \vmodel_i(h_i) + b_i(\jh) \,, &
  \amodel_i(\jh, a_i) &\doteq w_i(\jh) \amodel_i(h_i, a_i) \,,
\end{align}
where $w_i \colon\jhset \to\realset_{>0}$ are parametric positive weights and $b_i \colon\jhset \to\realset$ are parametric biases.
These transformed values are aggregated as weighted sums,
\begin{align}
  \vplex(\jh) &\doteq \sum_i \vmodel_i(\jh) \,, &
  \label{eq:qplex:advantage}
  \aplex(\jh, \ja) &\doteq \sum_i \lambda_i(\jh, \ja) \amodel_i(\jh, a_i) \,, 
\end{align}
where $\lambda_i\colon \jhset\times\jaset \to\realset_{>0}$ are parametric positive weights.
Finally, the QPLEX joint values are obtained by recombining aggregate values and advantages, $\qplex(\jh, \ja) \doteq \vplex(\jh) + \aplex(\jh, \ja)$.

This sequence of decomposition, transformations, and recomposition, combined with positive weights $w_i$ and $\lambda_i$, results in the constraint from \cref{thm:igm:advantage-constraint} being satisfied.
Consequently, \citet{wang_qplex_2020} appeal to the universal approximation theorem (UAT) to argue that the function class of QPLEX is IGM-complete.
In \cref{sec:is-qplex-igm-complete}, we address technical concerns and conclude that, based on a \emph{weaker} form of UAT, the function class realizable by QPLEX is technically that of \emph{measurable} IGM values.

\paragraph{Stateful value function decomposition}

Practical implementations of value function decomposition methods often employ stateful joint values $Q(\jh, s, \ja)$ and diverge from the stateless theoretical derivations in ways that may undermine core IGM properties, e.g., as seen for QMIX in \pymarlone~\cite{rashid_monotonic_2020}, QMIX in \pymarl~\cite{ellis_smacv2_2023}, and both QMIX and QPLEX in \jaxmarl~\cite{rutherford_jaxmarl_2024}) 
To address the effects of state in value function decomposition, \citet{marchesini_stateful_2024} formulate a state-compliant version of IGM.

\begin{definition}[Stateful-IGM]
\label{thm:stateful-igm}

  Individual utilities $\{ Q_i(h_i, a_i) \}\fori$ and stateful joint values $Q(\jh, s, \ja)$ satisfy stateful-IGM iff $\bigtimes_i \argmax_{a_i} Q_i(h_i, a_i) = \argmax_{\ja} \Exp_{s\mid \jh}\left[ Q(\jh, s, \ja) \right]$.
  %
\end{definition}

\citet{marchesini_stateful_2024} show that the stateful implementations of QMIX and QPLEX continue to satisfy IGM, while the stateful implementation of QPLEX (which employs historyless stateful weights $w_i(s), \lambda_i(s, \ja)$) fails to achieve the full IGM function class.
Nonetheless, stateful implementations often perform well in practice, and remain a common occurrence.

\section{Fixing incomplete value function decomposition}

Although QPLEX is IGM-complete, it is expressed as a convoluted sequence of transformations that are never fully motivated or justified.
Fully unrolling the QPLEX values in terms of the individual utilities, we get $\qplex(\jh, \ja) = \sum_i w_i(\jh) \vmodel_i(h_i) + b_i(\jh) + w_i(\jh) \lambda_i(\jh, \ja) \amodel_i(h_i, a_i)$,
%
%
a complex expression that raises questions about which components are truly important or necessary, e.g., the product of individual advantages with two types of positive weights $w_i(\jh)$ and $\lambda_i(\jh, \ja)$ appears to be redundant.
Ultimately, QPLEX only represents one instance in the space of all IGM-complete models, and whether simpler or better-performing models exist remains an open question.

The convoluted nature of the QPLEX transformations motivate us to find a simpler and more general formulation of IGM-complete decomposition.
In this section, we first present a simple formulation of the IGM-complete function class.
Then, we use this formulation to derive QFIX, a novel family of value function decomposition models that operate by ``fixing'' (read: expanding) the representation capabilities of prior non-IGM-complete models.
We derive two primary instances of QFIX based on ``fixing'' VDN and QMIX respectively, and a third instance designed to resemble QPLEX.
Then, we derive \emph{additive} QFIX (Q+FIX), a simple variant of QFIX that achieves significant practical performance gains, and derive Q+FIX counterparts of the QFIX instances.
Finally, we discuss stateful variants of QFIX and how the use of centralized state information affects its theoretical properties.

\subsection{A simple parameterization of the IGM function class}
\label{sec:qigm}

We aim to formalize IGM-complete value function decomposition in its simplest and most essential form.
We begin by simplifying \cref{thm:igm:advantage-constraint}, noting that three of its four constraints are satisfied by definition; the only one that requires active enforcement is $A_i(h_i, a\opt_i) = 0$, or equivalently $A(\jh, \ja) = 0 \implies \forall i\, (A_i(h_i, a_i) = 0)$.
However, we also note that \cref{thm:igm:advantage-constraint} is actually underspecified, and misidentifies the case where $\forall i (A_i(h_i, a_i) = 0)$ and $A(\jh, \ja) < 0$ as compliant with IGM when it is not.\footnote{Luckily, this issue is exclusive to \cref{thm:igm:advantage-constraint}, and QPLEX itself does not suffer from the same issue.}
To address this case, we need $\forall i\, (A_i(h_i, a_i) = 0) \implies A(\jh, \ja) = 0$.

\begin{proposition}[Simplified Advantage Constraints]
  \label{thm:igm:advantage-constraint:simplified}
  Individual utilities $\{ Q_i(h_i, a_i) \}\fori$ and joint values $Q(\jh, \ja)$ satisfy IGM iff $\forall i\, (A_i(h_i, a_i) = 0) \iff A(\jh, \ja) = 0$, or equivalently, via contraposition, $\exists i\, (A_i(h_i, a_i) < 0) \iff A(\jh, \ja) < 0$.
\end{proposition}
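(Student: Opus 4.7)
The plan is to show that the simplified biconditional is literally the set-equality defining IGM, once both sides are rewritten as membership predicates. The whole argument is an unfolding of definitions; there is no real combinatorial content.

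First I would record the two facts that drive everything: since $V_i(h_i) = \max_{a'_i} Q_i(h_i, a'_i)$, the individual advantage satisfies $A_i(h_i, a_i) \le 0$, with equality iff $a_i \in \argmax_{a'_i} Q_i(h_i, a'_i)$; analogously, $A(\jh, \ja) \le 0$ with equality iff $\ja \in \argmax_{\ja'} Q(\jh, \ja')$. Consequently, the predicate $\forall i\, (A_i(h_i, a_i) = 0)$ is exactly the predicate ``$\ja \in \bigtimes_i \argmax_{a'_i} Q_i(h_i, a'_i)$'', and the predicate $A(\jh, \ja) = 0$ is exactly ``$\ja \in \argmax_{\ja'} Q(\jh, \ja')$''.

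Next I would fix an arbitrary $\jh \in \jhset$ and quantify over $\ja \in \jaset$. The proposed condition $\forall i\,(A_i(h_i, a_i) = 0) \iff A(\jh, \ja) = 0$ then reads, for every $\ja$,
\begin{equation*}
  \ja \in \bigtimes_i \argmax_{a'_i} Q_i(h_i, a'_i) \iff \ja \in \argmax_{\ja'} Q(\jh, \ja') \,,
\end{equation*}
which is precisely the set equality $\bigtimes_i \argmax_{a_i} Q_i(h_i, a_i) = \argmax_\ja Q(\jh, \ja)$ from \cref{thm:igm}. Since $\jh$ was arbitrary, this establishes the biconditional between the simplified advantage constraint and IGM. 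The contrapositive form $\exists i\,(A_i(h_i, a_i) < 0) \iff A(\jh, \ja) < 0$ then follows immediately by negating both sides, using the fact that both $A_i \le 0$ and $A \le 0$ always hold, so ``not equal to $0$'' is the same as ``strictly less than $0$''.

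The only subtlety worth flagging is making sure that the claim holds uniformly in $\jh$ and $\ja$, i.e., that the biconditional is stated ``for all joint histories and joint actions'' rather than for a specific pair; this matches the universally quantified phrasing of \cref{thm:igm:advantage-constraint} and is what allows the reduction to set equality. Beyond that, there is no hard step: the work is entirely in recognizing that the cartesian-product argmax is the conjunction of individual argmax memberships, which corresponds exactly to the conjunction of the vanishing-advantage conditions $A_i(h_i, a_i) = 0$.
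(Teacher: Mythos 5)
Your proof is correct. The key observations---that $A_i(h_i,a_i)\le 0$ with equality exactly on $\argmax_{a_i'} Q_i(h_i,a_i')$, that $A(\jh,\ja)\le 0$ with equality exactly on $\argmax_{\ja'} Q(\jh,\ja')$, and that a universally quantified membership biconditional is just set extensionality---do all the work, and the contrapositive form follows as you say because non-positivity turns ``$\neq 0$'' into ``$<0$''.

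Your route is, however, genuinely different from the paper's. The paper does not give a standalone proof of this proposition at all: it derives the simplified constraints by pruning \cref{thm:igm:advantage-constraint} (the QPLEX-style advantage constraints), arguing that three of its four conditions hold by definition of advantages, that the remaining one is equivalent to $A(\jh,\ja)=0 \implies \forall i\,(A_i(h_i,a_i)=0)$, and then patching in the converse implication $\forall i\,(A_i(h_i,a_i)=0)\implies A(\jh,\ja)=0$ to repair an underspecification it identifies in that earlier proposition. Your argument instead goes directly from \cref{thm:igm}, rewriting both sides of the biconditional as membership predicates and invoking extensionality. This buys you two things: the proof is self-contained (it does not inherit any reliance on \cref{thm:igm:advantage-constraint}, which the paper itself flags as flawed), and it makes transparent \emph{why} the equivalence holds---the Cartesian-product argmax is literally the conjunction of vanishing individual advantages. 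The paper's derivation, by contrast, buys continuity with the QPLEX literature and makes explicit exactly which of the original four constraints carry content. Both are valid; yours is arguably the cleaner foundation for the proposition as stated.
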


In essence, constructing joint advantages that are negative iff any of the individual advantages are negative is both sufficient and necessary to satisfy IGM.
Consider the purposefully named function
\begin{equation}
  \label{eq:qigm}
  \qigm(\jh, \ja) \doteq w(\jh, \ja) f(u_1, \ldots, u_N) + b(\jh) \,,
\end{equation}
where $w \colon\jhset\times\jaset \to\realset_{>0}$ is an arbitrary positive function of joint history and joint action, $b \colon\jhset \to\realset$ is an arbitrary function of joint history, and $f \colon\realset_{\le 0}^N \to\realset_{\le 0}$ is a non-positive function that is zero iff all inputs are zero, e.g., $f(u_1, \ldots, u_N) = \sum_i u_i$ is a simple instance of $f$.
Then,
\begin{align}
  \vigm(\jh) &\doteq \max_{\ja} \qigm(\jh, \ja) = b(\jh) \,, \\
  \aigm(\jh, \ja) &\doteq \qigm(\jh, \ja) - \vigm(\jh) = w(\jh, \ja) f(u_1, \ldots, u_N) \,.
\end{align}

Essentially, $\qigm$ denotes a relationship where any deviation from individual maximality (characterized by at least one negative utility $u_i < 0$, and corresponding to a negative $f(u_1, \ldots, u_N) < 0$) is transformed into an arbitrary deviation $w(\jh, \ja) f(u_1, \ldots, u_N) < 0$ from joint maximality (and vice versa).
Per \cref{thm:igm:advantage-constraint:simplified}, $\qigm$ represents the full IGM function class.

\begin{proposition}
  \label{thm:qigm:igm}
  For any $f$, $w$, and $b$, values $\{ Q_i \}\fori$ and $\qigm$ satisfy IGM.
  For any $f$, and given free choice of $w$ and $b$, the function class of $\{ Q_i \}\fori$ and $\qigm$ is IGM-complete.
  (Proof in \cref{sec:proof:qigm:igm}.)
\end{proposition}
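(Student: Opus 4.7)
The plan is to prove the two claims of \cref{thm:qigm:igm} separately, using the biconditional characterization from \cref{thm:igm:advantage-constraint:simplified} as the working handle on IGM.

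For the soundness claim (any valid $f$, $w$, $b$ yield an IGM-compliant pair), I would first compute $\vigm$ and $\aigm$ explicitly. Since $w(\jh, \ja) > 0$ and $f(\ju) \le 0$ with equality iff every $u_i = 0$, the product $w \cdot f$ attains its maximum value $0$ exactly on joint actions that are individually maximal for every $Q_i(h_i, \cdot)$. Such joint actions always exist because each individual maximization is well-defined, so $\vigm(\jh) = b(\jh)$ and $\aigm(\jh, \ja) = w(\jh, \ja) f(\ju)$. The latter vanishes iff every $u_i = 0$, which is exactly the biconditional in \cref{thm:igm:advantage-constraint:simplified}, and IGM follows.

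For the completeness claim, I would fix an arbitrary IGM-compliant pair $(\{Q_i\}\fori, Q)$ and reverse-engineer $w$ and $b$ so that $\qigm \equiv Q$. Setting $b(\jh) \doteq V(\jh) = \max_{\ja} Q(\jh, \ja)$ matches the joint value and reduces the remaining task to enforcing $w(\jh, \ja) f(\ju) = A(\jh, \ja)$ for every $\ja$. For a joint action with some $u_i < 0$, \cref{thm:igm:advantage-constraint:simplified} gives $A(\jh, \ja) < 0$ while the hypothesis on $f$ gives $f(\ju) < 0$; I would therefore set $w(\jh, \ja) \doteq A(\jh, \ja) / f(\ju)$, a ratio of two negatives and hence strictly positive. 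For a joint action with every $u_i = 0$, the same biconditional gives $A(\jh, \ja) = 0 = f(\ju)$, so any positive value of $w$ (e.g., $1$) satisfies $w \cdot f = 0 = A$ trivially.

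The only real technical point is verifying that this case analysis is both exhaustive and internally consistent: one must rule out any joint action realizing $A = 0 \wedge f < 0$ (which would force $w = 0$, violating positivity) or $A < 0 \wedge f = 0$ (which no finite positive $w$ can produce). Both impossibilities are delivered by the biconditional in \cref{thm:igm:advantage-constraint:simplified}, which is precisely why that simplification, rather than the original \cref{thm:igm:advantage-constraint}, is the right lemma to invoke here.
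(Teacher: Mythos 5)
Your proposal is correct and follows essentially the same route as the paper's proof: the same explicit computation of $\vigm(\jh) = b(\jh)$ and $\aigm(\jh,\ja) = w(\jh,\ja) f(\ju)$ for soundness, and the same reverse-engineered choice $b(\jh) = V(\jh)$, $w(\jh,\ja) = A(\jh,\ja)/f(\ju)$ with a case split on $f(\ju) = 0$ for completeness. If anything you are slightly more careful than the paper, which writes ``any value'' for $w$ in the $f(\ju)=0$ case where positivity must still be respected, and which does not explicitly observe that $A/f$ is a ratio of two negatives nor spell out why the cases $A=0 \wedge f<0$ and $A<0 \wedge f=0$ cannot occur.
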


$\qigm$ is a simple formulation of the IGM function class based on a single weighted (via $w$) transformation (via $f$) of individual advantages.
Next, we explore how this formulation directly inspires the derivation of QFIX, a closely related novel family of value function decomposition models.

\subsection{QFIX}
\label{sec:qfix}

\begin{figure*}
  \centering
  
  \begin{subfigure}{0.49\linewidth}
    \centering
    \includegraphics[width=\linewidth]{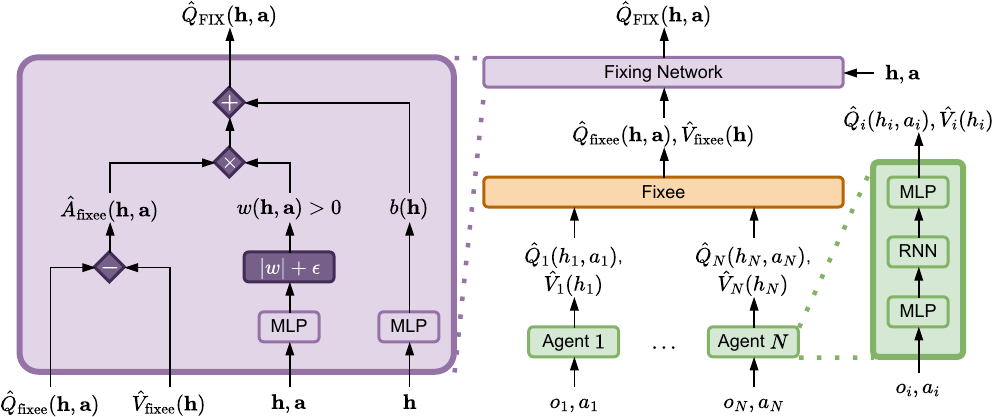}
  \end{subfigure}
  \hfill
  \begin{subfigure}{0.49\linewidth}
    \centering
    \includegraphics[width=\linewidth]{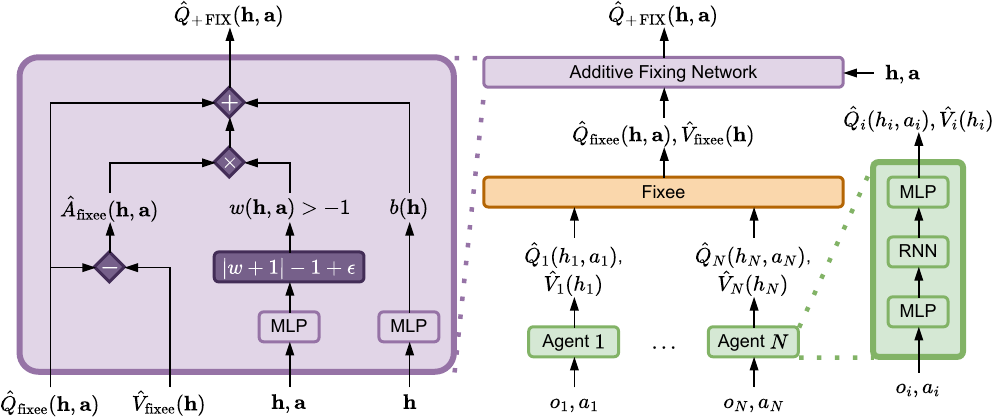}
  \end{subfigure}
  \caption{Diagrams for QFIX (left) and Q+FIX (right).}
  \label{fig:qfix:diagrams}
\end{figure*}

Let $\qfixee(\jh, \ja)$ denote a ``fixee'' value function decomposition model that satisfies IGM but is not IGM-complete, e.g., VDN or QMIX.
\Cref{eq:qigm} suggests a method to ``fix'' $\qfixee$ and expand its function class to match the full class of IGM functions.
We can extend the expressiveness of $\qfixee$ by processing it through a ``fixing'' network that resembles \cref{eq:qigm},
\begin{equation}
  \label{eq:qfix}
  \qfix(\jh, \ja) \doteq w(\jh, \ja) \afixee(\jh, \ja) + b(\jh) \,,
\end{equation}
where $w \colon\jhset\times\jaset \to\realset_{>0}$ is a parametric positive model, $b \colon\jhset \to\realset$ is a parametric model, and $\afixee \colon\jhset\times\jaset \to\realset_{\le 0}$ is the non-positive joint advantage of the fixee as defined by
\begin{align}
  \vfixee(\jh) &\doteq \max_\ja \qfixee(\jh, \ja) \,, &
  \afixee(\jh, \ja) &\doteq \qfixee(\jh, \ja) - \vfixee(\jh) \,.
\end{align}

See \cref{fig:qfix:diagrams} for a diagram of QFIX.
We note that $\afixee(\jh, \ja) = 0$ iff the joint action $\ja$ is maximal according to $\qfixee$, and negative otherwise.
Given that $\qfixee$ satisfies IGM by assumption, $\ja$ is maximal iff the individual actions $a_i$ are maximal according to $\qmodel_i(h_i, a_i)$, or, equivalently, iff $\amodel_i(h_i, a_i) = 0$.
In short, $\afixee(\jh, \ja)$ satisfies the requirements of $f$ under \cref{eq:qigm}.

\begin{proposition}
  \label{thm:qfix}
  QFIX satisfies IGM.
  The function class of QFIX is that of (measurable) IGM values.
  %
  (Proof in \cref{sec:proof:qfix}.)
\end{proposition}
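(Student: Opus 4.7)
The plan is to prove the two claims separately, with the first (IGM satisfaction) being a direct structural calculation and the second (completeness) being a constructive inversion argument that essentially reduces to \cref{thm:qigm:igm}.

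For IGM satisfaction, I would begin by computing $\vfix$ and $\afix$ explicitly from the definition $\qfix(\jh,\ja) = w(\jh,\ja)\afixee(\jh,\ja) + b(\jh)$. Since $w > 0$ and $\afixee \le 0$, the product $w \afixee$ is nonpositive and attains its maximum of zero over $\ja$ exactly when $\afixee(\jh,\ja) = 0$. This gives $\vfix(\jh) = b(\jh)$ and $\afix(\jh,\ja) = w(\jh,\ja)\afixee(\jh,\ja)$. I would then invoke \cref{thm:igm:advantage-constraint:simplified}: the biconditional $\afix(\jh,\ja) = 0 \iff \forall i\,(u_i = 0)$ follows from the chain $\afix = 0 \iff \afixee = 0 \iff \forall i\,(u_i = 0)$, where the first equivalence uses $w > 0$ and the second uses the hypothesis that the fixee satisfies IGM (so its joint advantage vanishes precisely when every individual advantage does).

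For completeness, I would show that any (measurable) IGM-compliant joint value $Q^\star(\jh,\ja)$ with individual utilities $\{Q_i\}\fori$ lies in the image of the QFIX parameterization. Decomposing $Q^\star = V^\star + A^\star$ in the usual way, I would set $b(\jh) \doteq V^\star(\jh)$, instantiate the fixee on the same individual utilities (for concreteness, VDN, so that $\afixee(\jh,\ja) = \sum_i u_i$), and define
\begin{equation}
  w(\jh,\ja) \doteq
  \begin{cases}
    A^\star(\jh,\ja) / \afixee(\jh,\ja), & \afixee(\jh,\ja) < 0, \\
    1, & \afixee(\jh,\ja) = 0.
  \end{cases}
\end{equation}
The crucial observation is that, because both $Q^\star$ and the fixee satisfy IGM with the same individual utilities, the zero sets of $A^\star$ and $\afixee$ coincide and both quantities share the same sign (nonpositive) elsewhere; hence $w$ is well-defined and strictly positive everywhere, and $w \afixee + b = A^\star + V^\star = Q^\star$ pointwise. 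This mirrors, and essentially inherits from, the reasoning of \cref{thm:qigm:igm}, with $\afixee$ playing the role of $f$.

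The main obstacle is the measurability caveat flagged in the statement. The constructed $w$ is a ratio of measurable functions on the measurable set $\{\afixee < 0\}$ extended by a constant on its complement, which is measurable, and $b$ is measurable by construction; but converting this into a statement about the parametric function class realized by actual neural networks requires a universal-approximation argument analogous to the one the paper invokes for QPLEX in \cref{sec:is-qplex-igm-complete}. I would therefore state the completeness result for the class of measurable $w \colon \jhset\times\jaset \to \realset_{>0}$ and measurable $b \colon \jhset \to \realset$, and then appeal to the same weaker form of UAT to transfer the conclusion to the parametric setting, noting that positivity of $w$ can be enforced by composing with a strictly positive output activation.
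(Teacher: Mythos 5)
Your proposal is correct and follows essentially the same route as the paper: the paper's proof simply observes that \cref{eq:qfix} instantiates \cref{eq:qigm} with $\afixee$ in the role of $f$, so both IGM and completeness are inherited from \cref{thm:qigm:igm}, with the same measurability/universal-approximation caveat you raise handled by appealing to the weaker (measurable-target) form of UAT. You merely unroll that reduction explicitly — recomputing $\vfix$, $\afix$, and the inverse construction of $w$ and $b$ — which the paper delegates wholesale to the earlier proposition.
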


Given the free choice of fixee model $\qfixee$, QFIX really represents a wide family of value function decomposition models.
This allows us to consider more or less complex fixees (e.g., VDN, QMIX) and explore various possible tradeoffs between minimizing the complexity of the fixee model and minimizing the ``fixing'' burden on the fixing models $w, b$.
In our empirical evaluation, we will generally find that the fixing burden on $w, b$ is not significant, and that it is perfectly reasonable to combine QFIX with simpler fixees like VDN or \emph{tiny} versions of parametric fixees like QMIX.


\paragraph{Relationship to QPLEX}
\label{sec:relationship-qplex}

The advantage component of QFIX, $w(\jh, \ja) \afixee(\jh, \ja)$, is similar to one of the transformations of QPLEX, $\sum_i \lambda_i(\jh, \ja) \amodel_i(\jh, a_i)$, which comparably applies positive weights to transformed aggregates of the individual advantages.
This similarity is no coincidence, as it is specifically that component of QPLEX that is singularly responsible for ensuring IGM-completeness; it is a more convoluted form of our proposed fixing structure.
However, QPLEX also employs various other transformations that do not contribute to the IGM-complete function class, and their necessity remains questionable (beyond general considerations of modeling structure and size).

The weights $\lambda_i(\jh, \ja)$ employed by QPLEX are also more complex in that there is one such model per agent, and each is implemented via self-importance.
In contrast, we employ a simpler structure based on a single model implemented as a simple feed-forward network, and still manage to achieve performance improvements.
Our formulation is simpler in that it focuses entirely on this single transformation, which is minimally sufficient to guarantee IGM-completeness.

\paragraph{Fixing VDN}
\label{sec:qfix-sum}

We define QFIX-sum as an instance of QFIX based on ``fixing'' VDN, i.e., with $\qfixee(\jh, \ja) = \qvdn(\jh, \ja)$, which results in (see \cref{sec:derivations:qfix-sum} for an explicit derivation)
\begin{equation}
  \qfixsum(\jh, \ja) = w(\jh, \ja) \smash{\sum_i} \amodel_i(h_i, a_i) + b(\jh) \,.
\end{equation}

\paragraph{Fixing QMIX}
\label{sec:qfix-mono}

We define QFIX-mono as an instance of QFIX based on ``fixing'' QMIX, i.e., with $\qfixee(\jh, \ja) = \qmix(\jh, \ja)$, which results in (see \cref{sec:derivations:qfix-mono} for an explicit derivation)
\begin{equation}
  \qfixmono(\jh, \ja) = w(\jh, \ja) \left( \fmono(q_1, \ldots, q_N) - \fmono(v_1, \ldots, v_N) \right) + b(\jh) \,.
\end{equation}

\paragraph{Simplifying QPLEX}

Given the discussed similarity between QFIX and QPLEX, we may consider another variant of QFIX that also applies per-agent positive weights $w_i(\jh, \ja) > 0$.
Due to the linear structure that generalizes the additive structure of QFIX-sum, we call this variant QFIX-lin.
\begin{equation}
    \label{eq:qfixlin}
    \qfixlin(\jh, \ja) \doteq \smash{\sum_i} w_i(\jh, \ja) \amodel_i(h_i, a_i) + b(\jh) \,.
\end{equation}

QFIX-lin does not strictly satisfy the form of \cref{eq:qfix}, however, it represents a close enough variant of QFIX-sum that we consider it QFIX-adjacent and name it accordingly.
QFIX-lin is a strict generalization of QFIX-sum, which can be recovered as a special case where all the weights $w_i(\jh, \ja)$ are equal.
Formally, we must prove the IGM properties of QFIX-lin separately.

\begin{proposition}
    \label{thm:qfix-lin}
    QFIX-lin satisfies IGM.
    The function class of QFIX-lin is that of (measurable) IGM values.
    %
    (Proof in \cref{sec:proof:qfix-lin}.)
\end{proposition}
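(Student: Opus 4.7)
The plan is to prove the two claims of Proposition \ref{thm:qfix-lin} separately. For IGM soundness, I would verify the simplified advantage constraints of Proposition \ref{thm:igm:advantage-constraint:simplified} via a direct computation of $\vfixlin$ and $\afixlin$. For IGM-completeness, I would exhibit QFIX-sum as a strict specialization of QFIX-lin, so that the full IGM function class established for QFIX (and hence QFIX-sum) in Proposition \ref{thm:qfix} is inherited without further work.

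For the soundness direction, I would start from \cref{eq:qfixlin} and note that each weight $w_i(\jh, \ja) > 0$ and each individual advantage $\amodel_i(h_i, a_i) \le 0$ by definition, with $\amodel_i(h_i, a_i) = 0$ iff $a_i$ is individually maximal. Consequently, the sum $\sum_i w_i(\jh, \ja) \amodel_i(h_i, a_i)$ is non-positive and vanishes iff every $\amodel_i(h_i, a_i) = 0$. Because the $b(\jh)$ term does not depend on $\ja$, this identifies $\vfixlin(\jh) = \max_\ja \qfixlin(\jh, \ja) = b(\jh)$, attained exactly at the joint actions whose individual components are each individually maximal. Therefore $\afixlin(\jh, \ja) = \sum_i w_i(\jh, \ja) \amodel_i(h_i, a_i)$ is zero iff every individual advantage is zero, which is exactly the simplified advantage constraint of Proposition \ref{thm:igm:advantage-constraint:simplified}, so IGM holds.

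For IGM-completeness, I would observe that QFIX-lin strictly generalizes QFIX-sum: restricting \cref{eq:qfixlin} to the case where $w_i(\jh, \ja) = w(\jh, \ja)$ for every $i \in \iset$ recovers $\qfixsum(\jh, \ja) = w(\jh, \ja) \sum_i \amodel_i(h_i, a_i) + b(\jh)$. By Proposition \ref{thm:qfix}, the function class realized by QFIX-sum already coincides with the (measurable) IGM function class, so the class of QFIX-lin contains all (measurable) IGM values as well. Combined with the soundness direction, the inclusion is an equality.

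The main obstacle, such as it is, is mostly bookkeeping: one must handle the per-agent weights $w_i$ without assuming any structural relationship between them, which prevents a direct reuse of the QFIX algebra (where a single weight $w$ factors out of the sum). I would resolve this by keeping the argument entirely at the sign level, i.e.\ leveraging positivity of every $w_i$ and non-positivity of every $\amodel_i$ rather than any factorization, and then delegating the completeness side to QFIX-sum via the specialization argument instead of re-running a universal-approximation argument.
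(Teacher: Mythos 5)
Your proposal is correct and follows essentially the same route as the paper: the paper's proof likewise establishes IGM via the structure of the advantage term (stated tersely as QFIX-lin being a monotonic function of the individual advantages, which your sign-level argument via \cref{thm:igm:advantage-constraint:simplified} makes explicit) and obtains completeness by observing that QFIX-lin generalizes QFIX-sum, whose function class is already the measurable IGM class. Your write-up is simply a more detailed rendering of the paper's three-sentence argument.
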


\paragraph{Recovering the fixee model}
\label{sec:qfix-recovering}

We note that QFIX is able recover the fixee model via $w(\jh, \ja) = 1$ and $b(\jh) = \vfixee(\jh)$, for which $\qfix(\jh, \ja) = \afixee(\jh, \ja) + \vfixee(\jh) = \qfixee(\jh, \ja)$.
Such values of $w(\jh, \ja)$ and $b(\jh)$ establish a direct relationship between the fixee and fixed models, which is relevant as we next use this relationship to derive a better-performing \emph{additive} variant of QFIX.

\subsection{Additive QFIX (Q+FIX)}
\label{sec:q+fix}

In this section, we further derive a simple reparameterization of QFIX which, albeit having the same theoretical properties, achieves significant practical performance improvements.
This variant takes on an additive form when compared to the fixee model, hence its name \emph{additive QFIX} (Q+FIX).

As previously noted, the values of $w(\jh, \ja) = 1$ and $b(\jh) = \vfixee(\jh)$ hold a special significance for QFIX.
Q+FIX is obtained by reparameterizing $w$ and $b$ to incorporate such values additively,
\begin{align}
  \aqfix(\jh, \ja) &\doteq (w(\jh, \ja) + 1) \afixee(\jh, \ja) + ( b(\jh) + \vfixee(\jh) ) \nonumber \\
  &= \qfixee(\jh, \ja) + w(\jh, \ja) \afixee(\jh, \ja) + b(\jh) \,,
\end{align}
where $w \colon\jhset\times\jaset \to\realset_{>-1}$ is a parametric model constrained by $w(\jh, \ja) > -1$, $b \colon\jhset \to\realset$ is a parametric model, and $\qfixee$ and $\afixee$ are the fixee action-values and advantages.
%
Note that, with the reparameterization of $w$, its constraint has changed; Since $w(\jh, \ja) + 1 > 0$ must satisfy the positivity constraint from QFIX, the corresponding constraint for Q+FIX is therefore $w(\jh, \ja) > -1$.

See \cref{fig:qfix:diagrams} for a diagram.
This reparameterization allows Q+FIX to more directly exploit the original form of the fixee model, extending its representation via a separate additive component $\intervention(\jh, \ja) \doteq w(\jh, \ja) \afixee(\jh, \ja) + b(\jh)$ we call the \emph{fixing intervention}.
Because Q+FIX is a simple reparameterization of QFIX, the results from \cref{thm:qfix,thm:qfix-lin} apply trivially to their Q+FIX counterparts.
Next, we look at specific instances and other relevant implementation details.

\paragraph{Q+FIX-\{sum,mono,lin\}}

The Q+FIX counterparts to QFIX-\{sum,mono,lin\} are as follows.
See \cref{sec:derivations:q+fix-sum,sec:derivations:q+fix-mono,sec:derivations:q+fix-lin} for their corresponding derivations and specialized diagrams.
\begin{align}
  \aqfixsum(\jh, \ja) &= \sum_i \qmodel_i(h_i, a_i) + w(\jh, \ja) \sum_i \amodel_i(h_i, a_i) + b(\jh) \,, \\
  \aqfixmono(\jh, \ja) &= \fmono( \joint{q} ) + w(\jh, \ja) \left( \fmono( \joint{q} ) - \fmono( \joint{v} ) \right) + b(\jh) \,, \\
  \aqfixlin(\jh, \ja) &= \sum_i \qmodel_i(h_i, a_i) + \sum_i w_i(\jh, \ja) \amodel_i(h_i, a_i) + b(\jh) \,.
\end{align}

\paragraph{Detaching the advantages}
\label{sec:detach}

The additive form of Q+FIX enables the use of an implementation detail already employed by QPLEX that significantly improves performance: the detachment of the advantages when computing gradients.
This can be expressed using the stop-gradient operator,\footnote{The stop-gradient function is a mathematical anomaly whose value behaves like the identity function, $\stopg\left[ x \right] = x$, while its gradient behaves like the zero function, $\nabla_x \stopg\left[ x \right] = 0$. It is a functionality commonly provided by deep learning frameworks, e.g., \texttt{pytorch} provides this via the \texttt{Tensor.detach()} method.}
\begin{equation}
  \label{eq:q+fix}
  \aqfix(\jh, \ja) = \qfixee(\jh, \ja) + w(\jh, \ja) \stopg[ \afixee(\jh, \ja) ] + b(\jh) \,.
\end{equation}

The reason why detaching the advantages improves performance is not fully understood.
\citet[Appendix~B.2]{wang_qplex_2020} argue that it (cit.) ``\emph{increases the optimization stability of the max operator of the dueling structure}'', in reference to dueling networks~\citep{wang_dueling_2016}.
However, the connection between the detach and dueling networks remains unclear.
Instead, we hypothesize that detaching the advantage may mitigate adverse effects that the fixing structure may have on the gradients $\nabla_{\theta_i} \aqfix(\jh, \ja)$ of the joint values w.r.t. the agent parameters $\theta_i$ (see \cref{sec:detach-hypothesis}).

\paragraph{Annealing the intervention}


Another implementation detail we found to be occasionally useful to stabilize learning has been to introduce the fixing intervention smoothly during the early stages of training ($\approx 5\%$ of total timesteps) by employing an auxiliary loss $\lambda_\Delta \cdot \intervention^2(\jh, \ja)$ that minimizes the squared intervention, with a weight $\lambda_\Delta$ that is annealed from a starting value down to $0$, ultimately disabling this intervention loss.
This likely ensures that the early stages of training are focused on bootstrapping the fixee values, so that the fixing intervention can focus on making smaller adjustments.

\subsection{Stateful variants}
\label{sec:stateful-qfix}

As with QMIX and QPLEX, we may consider stateful variants of QFIX that partially deviate from the stateless theory developed so far.
Such variants warrant an explicit discussion on the implications of employing centralized state information~\citep{marchesini_stateful_2024}.
Different versions of stateful QFIX are possible by combining stateless/stateful fixees with stateless/stateful fixing networks.
As Q+FIX is a simple reparameterization of QFIX, its properties w.r.t the use of state are the same.
We briefly summarize the conclusions for two main stateful variants of QFIX, which are comparable to those for stateful QPLEX~\citep{marchesini_stateful_2024}:
\begin{enumerate*}[label=(\roman*)]
  \item[(History-State QFIX)] When employing history-state fixing models $w(\jh, s, \ja)$ and $b(\jh, s)$, QFIX both satisfies IGM and achieves a form of IGM-complete function class.
  \item[(State-Only QFIX)] When employing state-only fixing models $w(s, \ja)$ and $b(s)$, QFIX continues to satisfy IGM, but fails to achieve the IGM-complete function class.
\end{enumerate*}
See additional discussion in \cref{sec:appendix:stateful-qfix}.

%
%
%

\begin{figure}
  \begin{subfigure}{\linewidth}
    \centering
    \includegraphics[width=\linewidth]{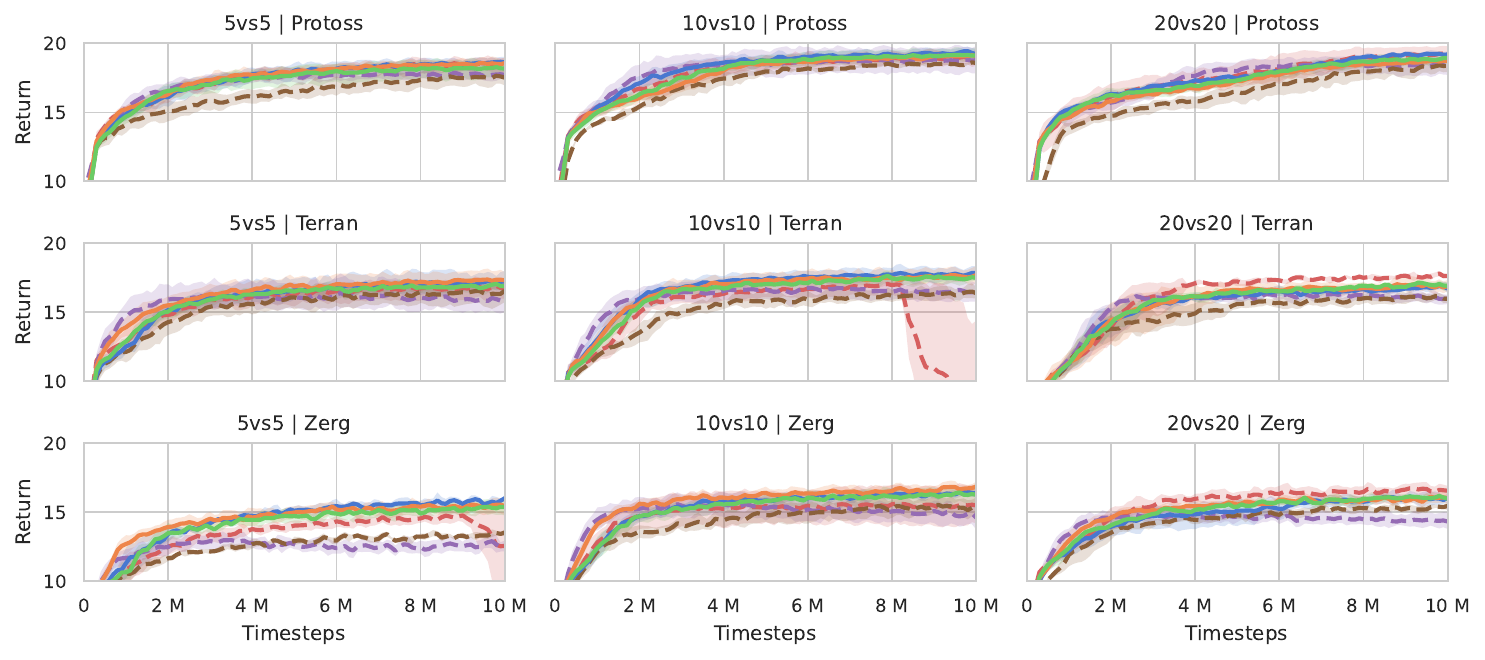}
    \caption{SMACv2 return mean ($5$ seeds).}
    \label{fig:results:smacv2:mean-returns}
  \end{subfigure}

  \begin{subfigure}{0.44\linewidth}
    \centering
    \includegraphics[height=2.4cm]{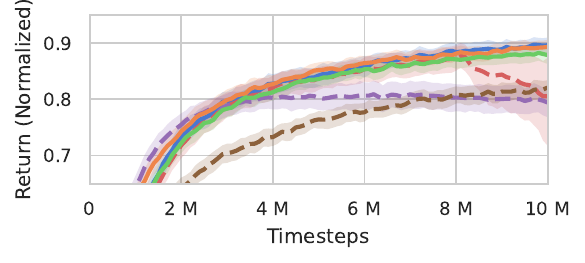}
    \caption{SMACv2 return mean aggregate ($45$ seeds).}
    \label{fig:results:smacv2:mean-returns:aggregate}
  \end{subfigure}
  \begin{subfigure}{0.54\linewidth}
    \centering
    \includegraphics[height=2.4cm]{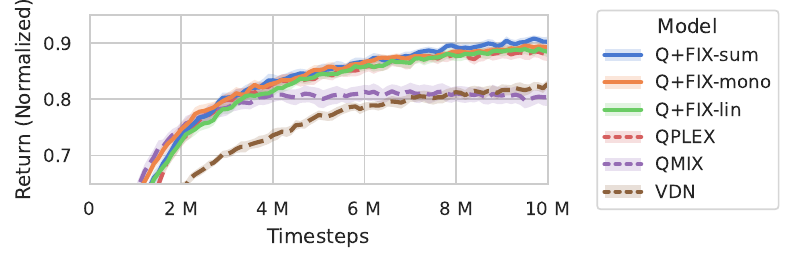}
    \caption{SMACv2 return IQM aggregate ($45$ seeds).}
    \label{fig:results:smacv2:iqm-returns:aggregate}
  \end{subfigure}

  \caption{SMACv2 results, bootstrapped $95\%$ CI.
    Aggregate returns are normalized per-task via $\tilde G_i \doteq \frac{G_i - \min_k G_k}{\max_k G_k - \min_k G_k}$, where $\{ G_i \}_i$ is the total set of returns logged by all models in a given task.}
  \label{fig:results:smacv2:returns}
\end{figure}

\vspace{-0.2cm}
\section{Evaluation}
\label{sec:evaluation}
\vspace{-0.2cm}

We perform an empirical evaluation of Q+FIX in
two popular multi-agent frameworks, \pymarl\ and \jaxmarl.
\cref{sec:architectures,sec:resources} contains practical details on architectures and used resources.

\paragraph{StarCraft Multi-Agent Challenge v2 (SMACv2)}

\pymarl\ 
provides baseline implementations for
SMACv2~\cite{ellis_smacv2_2023}, a popular benchmark for cooperative multi-agent control based on the real-time strategy game StarCraft II.
SMACv2 features two battling teams composed by configurable races, race-dependent and stochastically determined unit types, and team sizes.
Our empirical evaluation is based on $9$ scenarios obtained by combining the $3$ races (\Protoss, \Terran, and \Zerg) with $3$ team sizes (\sizev, \sizex, and \sizexx).
We use shorthand labels, e.g., \Protossv, \Terranx, \Zergxx.
\pymarl\ provides base implementations for VDN, QMIX, and QPLEX, and we implemented Q+FIX-\{sum,mono,lin\}.

%

\cref{fig:results:smacv2:mean-returns} contains the evaluation results based on mean performance, with $5$ independent runs per model per scenario.
As expected, VDN fails to be a competitive baseline on its own accord.
Fixing VDN via Q+FIX-sum, we are able to overcome this limitation, as noted by the corresponding performance gap.
QMIX sometimes exhibits fast initial learning speeds, albeit often to a sub-competitive final performance (\Protossv, \Terranv, \Terranx, \Zergx, \Terranxx, \Zergxx).
Fixing QMIX via Q+FIX-mono, we are often able to exploit the initial learning speed and complement it with improved convergence performance.
QPLEX is highly competitive and performs very well in some scenarios (\Protossv, \Protossxx, \Terranxx, \Zergxx), but underperforms in others (\Terranv, \Protossx, \Zergx), and exhibits troubling instabilities (\Zergv, \Terranx).
Q+FIX-lin, as the simplified variant inspired by QPLEX, manages to avoid such convergence instabilities, arguably as a consequence of the simpler structure.
Q+FIX-\{sum,mono,lin\} achieve similar performances in most cases.
Overall, Q+FIX-sum may be slightly outperforming other variants in some scenarios (\Terranv, \Zergv), possibly an indication that a simpler compositions are not just sufficient but possibly preferable.

In accordance to the methodology suggested by \citet{agarwal_deep_2021} to improve statistical significance and alleviate the impact of outliers,  \cref{fig:results:smacv2:mean-returns:aggregate,fig:results:smacv2:iqm-returns:aggregate} contain (normalized) aggregate results based on mean and interquantile mean (IQM).
%
Even ignoring the unstable convergence of QPLEX via the aggregate IQM results,
it is clear that the Q+FIX variants continue to outperform QPLEX at least marginally.
These results demonstrate that Q+FIX succeeds in enhancing the performance of its fixees, raising them to a level comparable to QPLEX while maintaining a more stable convergence.

\begin{wraptable}{r}{0.45\linewidth}
    \centering
    \caption{SMACv2 mixer sizes.}
    \label{tab:sizes}
    
    \tiny
    \bgroup 
    \setlength{\tabcolsep}{0.5em}
    \begin{tabular}{lrrrrrr}
        & \multicolumn{3}{c}{\Protoss}
        & \multicolumn{3}{c}{\Terran, \Zerg} \\
        & \sizev & \sizex & \sizexx
        & \sizev & \sizex & \sizexx \\
        \toprule
        QMIX
            & 38 k & 83 k & 201 k
            & 36 k & 79 k & 194 k \\
        QPLEX
            & 135 k & 326 k & 882 k
            & 126 k & 308 k & 846 k \\
        \rowcolor{gray!20}
        Q+FIX-sum
            & 20 k & 50 k & 138 k
            & 19 k & 48 k & 133 k \\
        Q+FIX-mono
            & 54 k & 180 k & 743 k
            & 50 k & 169 k & 708 k \\
        \rowcolor{gray!20}
        Q+FIX-lin
            & 21 k & 51 k & 140 k
            & 19 k & 48 k & 135 k \\
        \bottomrule
    \end{tabular}
    \egroup 
\end{wraptable}

\cref{tab:sizes} shows the sizes of \emph{mixing models} for the all methods that have one (smallest highlighted).
Notably, Q+FIX-\{sum,lin\} employ the smallest mixing models by a significant margin, indicating that their performance is a consequence of our proposed mixing structure over larger parameterizations.

\cref{sec:additional-results:smacv2} contains additional discussion on the SMACv2 evaluation, implementation details and chosen metrics, additional \emph{winrate} results, \emph{probability-of-improvement}~\cite{agarwal_deep_2021} results, and an ablation on model size for Q+FIX-mono and QMIX.

\vspace{-0.35cm}
\paragraph{Overcooked}

\jaxmarl~\cite{rutherford_jaxmarl_2024} provides baseline implementations for Overcooked~\cite{carroll_utility_2020}, another popular benchmark for cooperative multi-agent control focused on throughput efficiency.
Overcooked features two agents cooperating to complete meals.
Different layouts represent different challenges, e.g., subtask assignment and synchronization for efficiency.
\jaxmarl\ provides base implementations for independent Q-learning (IQL), VDN and QMIX (but not QPLEX), and we implemented Q+FIX-\{sum,mono,lin\}.
\cref{sec:additional-results:overcooked} contains further discussion on these tasks, and additional results.

\cref{fig:results:overcooked:returns} contains the evaluation results for the three more challenging layouts: \CoordinationRing, \ForcedCoordination, and \CounterCircuit.
In contrast to the SMACv2 results, this time it is specifically Q+FIX-mono to outperform other baselines and Q+FIX variants, indicating that there are concrete situations where Q+FIX is able to exploit a more complex fixee structure while still augmenting its performance.
Aside from this difference, these results reaffirm the ability of QFIX to greatly expand the representation capabilities of the underlying fixees, enabling higher performances.

\begin{figure}
  \centering
  \includegraphics[width=0.9\linewidth]{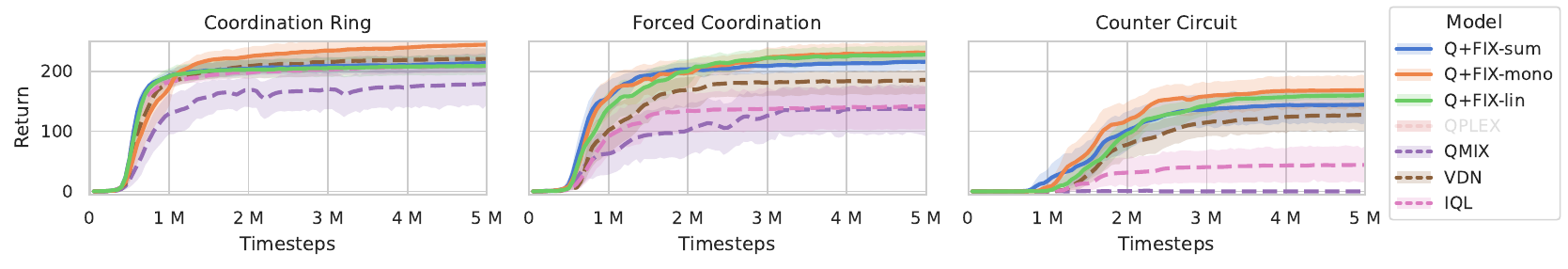}
  \caption{Overcooked return mean, bootstrapped $95\%$ CI ($20$ seeds).}
  \label{fig:results:overcooked:returns}
\end{figure}

\vspace{-0.35cm}
\section{Conclusions}
\label{sec:conclusions}
\vspace{-0.3cm}

In this work, we have advanced our understanding of the IGM function class by proposing a simple formulation of the IGM property.
From this formulation, we were able to naturally derive QFIX, a novel family of value function decomposition methods that enhance prior methods via a simple weighted transformation of their outputs, and allows the derivation and implementation of various IGM-complete models that are significantly simpler than QPLEX.
Our empirical evaluation on multiple SMACv2 and Overcooked tasks demonstrates that QFIX models succeed in
\begin{enumerate*}[label=(\roman*)]
    \item enhancing the performance of prior incomplete models like VDN and QMIX,
    \item achieving similar or better performance than QPLEX, with better convergence stability, and
    \item all this while requiring smaller mixing models.
\end{enumerate*}
Our contribution not only represents a novel approach that performs well, but also opens the door for new methods based on the QFIX framework. 

\vspace{-0.3cm}
\paragraph{Limitations}

This work expands the family of value function decomposition methods.
Though value function decomposition is very popular in the current MARL literature, it is not yet clear that such methods are necessarily the best approach to decentralized multi-agent control.
This remains an important open question, but one that falls outside of the scope of this work.

\begin{ack}
We thank Enrico Marchesini for valuable conversations and insights gained into prior work on value function decomposition.
This research was funded by NSF award 2044993.
\end{ack}

\bibliography{references}
\bibliographystyle{plainnat}


\appendix

\section{Is QPLEX truly IGM-complete?}
\label{sec:is-qplex-igm-complete}

In this section, we take a closer look at \citep[Proposition 2]{wang_dueling_2016} which appeals to the universal approximation theorem (UAT) to claim that that QPLEX is IGM-complete.
We will identify a technical issue that makes many ``strong'' forms of UAT not formally applicable, and come to the primary conclusions that
\begin{enumerate*}[label=(\roman*)]
  \item only ``weak'' forms of UAT are applicable to QPLEX, and
  \item consequently, QPLEX is able to approximate ``only'' the function class of \emph{measurable} IGM values.
\end{enumerate*}
To be clear, this is far from being a strict limitation in practice, as the class of measurable functions is extremely wide and contains any reasonable function to model, and mostly excludes deeply degenerate cases.

The main goal of this discussion is to be more specific in regards to \emph{what} version of UAT is applicable to methods like QPLEX (and QFIX), and what kinds of convergence guarantees they actually entail.

Part of the issue at hand is that UATs come in a variety of forms, each making different assumptions on the model and establishing different notions of approximation to different classes of target functions.
The UATs of \citet{cybenko_approximation_1989} and of \citet{pinkus_approximation_1999} are among the most well known, and are formulated in terms of \emph{uniform convergence}, a strong notion of approximation that is only applicable to approximate \emph{continuous} functions.
However, other forms of UAT are applicable to approximate wider classes of functions, although they are also typically associated with weaker notions of approximation.
\citet[Theorem 1]{hornik_approximation_1991} establishes a form of UAT that is applicable to functions in the Lebesgue spaces $L^p$ and entails \emph{convergence in $p$-norm}.
\citet{hornik_approximation_1991} also informally formulates a corollary that is applicable to functions that are merely \emph{measurable}, and ``only'' entails \emph{convergence in measure $\mu$}.

\paragraph{Which universal approximation theorem?}

The appeal to UAT made by \citet{wang_qplex_2020} cites a form of UAT that is analogous to those of \citet{cybenko_approximation_1989,pinkus_approximation_1999} that are formally applicable to continuous functions only.
However we note two relevant details:
\begin{enumerate*}[label=(\roman*)]
  \item QPLEX constructs $\qplex$ by composing individual values via models $w_i$, $b_i$ and $\lambda_i$; therefore any appeal to UAT must refer to these models rather than $\qplex$ as a whole.
  \item The proof of Proposition 2 is based on constructing a piece-wise target $\lambda\opt_i(\jh, \ja)$ that is clearly \emph{not} guaranteed to be continuous.
  These are
\end{enumerate*}
\begin{equation}
  \lambda\opt_i(\jh, \ja) = \begin{cases}
    \frac{1}{N} \frac{ A(\jh, \ja) }{ A_i(h_i, a_i) } & \text{when } A_i(h_i, a_i) < 0 \,, \\
    \text{any value} & \text{when } A_i(h_i, a_i) = 0 \,,
  \end{cases}
\end{equation}
where $A(\jh, \ja)$ is the advantage of the target IGM value function.
Clearly, as the target $\lambda\opt_i$ is not continuous, it is improper to appeal to a form of UAT that is based on continuous targets.

\paragraph{Resolution}

To resolve this technicality, we must find a version of UAT that is applicable to a target like $\lambda\opt_i$.
It is not immediately clear that $\lambda\opt_i$ belongs to a Lebesgue space $L^p$, or what kinds of \emph{simple} assumptions can be formulated to make it so.
As a simple resolution, we instead appeal to the weaker form of UAT by \citet{cybenko_approximation_1989} (presented informally in the discussion section) based on \emph{measurable} functions.
However, even this form of UAT still requires some technical assumptions.

To guarantee that $\lambda\opt_i$ is measurable, it is sufficient to assume that $Q_i(h_i, a_i)$ and $Q(\jh, \ja)$ are measurable functions.
Then,
\begin{itemize}
    \item $V_i(h_i)$, $V(\jh)$, $A_i(h_i, a_i)$, $A(\jh, \ja)$ are measurable;
    \item $\argmax_{h_i, a_i} A_i(h_i, a_i)$ (the preimage of $A_i(h_i, a_i) = 0$) is a measurable set;
    \item $\lambda\opt_i$ is a piece-wise function defined by combining measurable functions partitioned in (two) measurable sets, and is therefore also measurable.
\end{itemize}

This is sufficient to guarantee convergence to $\lambda\opt_i$ in measure.
Technically this assumption means that there are \emph{non-measurable} IGM values that cannot be approximated by QPLEX (nor QFIX).
However, we reiterate that this is not a practical concern as
\begin{enumerate*}[label=(\roman*)]
  \item they represent an insignificant subset of all IGM values, and 
  \item they are degenerate and unlikely to match realistic and desirable notions of values.
\end{enumerate*}

\section{Proofs}
\label{sec:proofs}

\subsection{Proof of \cref{thm:qigm:igm}}
\label{sec:proof:qigm:igm}

We prove the two statements separately.

\paragraph{$\qigm$ satisfies IGM}

\begin{proof}

  For any given joint history $\jh$, let $a\opt_i \in \argmax_{a_i} Q_i(h_i, a_i)$ denote any maximal action according to the individual utilities, and let $\ja\opt = (a\opt_1, \ldots, a\opt_N)$ be a joint action constructed accordingly.
  For any $\ja\opt$ constructed this way, the corresponding advantage utilities are zero $\forall i \, (u\opt_i = 0)$, and
  \begin{align}
    \qigm(\jh, \ja\opt) &= w(\jh, \ja\opt) \underbrace{ f(u\opt_1, \ldots, u\opt_N) }_{=0} + b(\jh) \nonumber \\
    &= b(\jh) \,.
  \end{align}

  For any other $\ja$, we have at least one strictly negative utility $\exists i \, (u_i < 0)$, and
  \begin{align}
    \qigm(\jh, \ja) &= \underbrace{ w(\jh, \ja) }_{>0} \underbrace{ f(u_1, \ldots, u_N) }_{<0} + b(\jh) \nonumber \\
    &< b(\jh) \,.
  \end{align}

  Therefore $\ja\opt \in \argmax_{\ja} \qigm(\jh, \ja)$, and the actions that maximize the individual utilities also maximize the joint value.

\end{proof}

\paragraph{$\qigm$ is IGM-complete}

\begin{proof}[Proof by mutual inclusion]

  Let us denote the function class of $\qigm$ as $\fc(\qigm)$, and the IGM-complete function class as $\fcigm$.
  We prove $\fc(\qigm) = \fcigm$ by mutual inclusion:
  \begin{enumerate}
    \item $Q \in\fc(\qigm) \implies Q \in\fcigm$, i.e., $\qigm$ satisfies IGM (already proven above),
    \item $Q \in\fcigm \implies Q \in\fc(\qigm)$, i.e., any IGM function is representable by $\qigm$.
  \end{enumerate}

  Step 1 was already proven earlier.
  Next, we prove step 2.



  Let $Q_i(h_i, a_i)$ and $Q(\jh, \ja)$ denote an arbitrary set of individual and joint values that satisfy IGM, i.e., $Q\in \fcigm$.
  Let us denote the usual corresponding individual values and advantages as follows,
  \begin{align}
    V_i(h_i) &= \max_{a_i} Q_i(h_i, a_i) \,, &
    A_i(h_i, a_i) &= Q_i(h_i, a_i) - V_i(h_i) \,, \\
    V(\jh) &= \max_{\ja} Q(\jh, \ja) \,, &
    A(\jh, \ja) &= Q(\jh, \ja) - V(\jh) \,,
  \end{align}
  with the usual shorthand $q_i = Q_i(h_i, a_i)$ and $v_i = V_i(h_i)$, and $u_i = A_i(h_i, a_i)$.

  For any $f$ that satisfies the requirements of \cref{eq:qigm}, let $w$ and $b$ be defined as follows,
  \begin{align}
    b(\jh) &= V(\jh) \,, \\
    w(\jh, \ja) &= \begin{cases}
      \frac{ A(\jh, \ja) }{ f(u_1, \ldots, u_N) } \,, & \text{if } f(u_1, \ldots, u_N) \neq 0 \,, \\
      \text{any value} \,, & \text{otherwise} \,.
    \end{cases}
  \end{align}

  For any given joint history $\jh$, let $a\opt_i \in \argmax_{a_i} Q_i(h_i, a_i)$ denote a maximal action according to the individual utilities, and $\ja\opt = (a\opt_1, \ldots, a\opt_N)$ the corresponding joint action.
  Given that $Q$ satisfies IGM by assumption, we have $\ja\opt \in \argmax_{\ja} Q(\jh, \ja)$, and $Q(\jh, \ja\opt) = \max_{\ja} Q(\jh, \ja) = V(\jh)$.

  For any $\ja\opt$ constructed this way, the corresponding advantage utilities are zero $\forall i \, (u_i = 0)$, and
  \begin{align}
    \qigm(\jh, \ja\opt) &= w(\jh, \ja\opt) f(u_1, \ldots, u_N) + b(\jh) \nonumber \\
                        &= w(\jh, \ja\opt) \underbrace{ f(0, \ldots, 0) }_{=0} + b(\jh) \nonumber \\
                        &= V(\jh) \nonumber \\
                        &= Q(\jh, \ja\opt) \,.
  \end{align}

  For any other $\ja$, we have at least one strictly negative utility $\exists i \, (u_i < 0)$, and
  \begin{align}
    \qigm(\jh, \ja) &= w(\jh, \ja) f(u_1, \ldots, u_N) + b(\jh) \nonumber \\
                    &= \frac{ A(\jh, \ja) }{ f(u_1, \ldots, u_N) } f(u_1, \ldots, u_N) + V(\jh) \nonumber \\
                    &= A(\jh, \ja) + V(\jh) \nonumber \\
                    &= Q(\jh, \ja) \,.
  \end{align}

  In either case, $\qigm(\jh, \ja) = Q(\jh, \ja)$ for all inputs.
  Therefore $Q \in\fcigm \implies Q \in\fc(\qigm)$.

\end{proof}

\subsection{Proof of \cref{thm:qfix}}
\label{sec:proof:qfix}

\begin{proof}
  \Cref{eq:qfix} satisfies the form and requirements of \cref{eq:qigm}.
  Therefore, IGM follows from \cref{thm:qigm:igm}.
  Assuming target IGM values that are measurable, then the targets constructed in the proof of \cref{thm:qigm:igm} are also measurable, and we can appeal to the universal approximation theorems of \citet{hornik_approximation_1991} to show that $w$, $b$ are able to approximate such targets.
  (also see \cref{sec:is-qplex-igm-complete} for a similar discussion relating to QPLEX).
\end{proof}

\subsection{Proof of \cref{thm:qfix-lin}}
\label{sec:proof:qfix-lin}

\begin{proof}
QFIX-lin is a monotonic function of individual advantages and therefore satisfies IGM.
QFIX-lin is also a generalization of QFIX-sum, therefore its function class is a superset of the QFIX-sum function class, i.e., the class of measurable IGM values.
Therefore, QFIX-lin can represent all measurable functions that satisfy IGM, and none of those that do not satisfy IGM.
\end{proof}

%

\section{Derivations}
\label{sec:derivations}

This section contains explicit long-form derivations that had to be removed from the main document due to space limitations.

\subsection{VDN maximal values and advantages}
\label{sec:derivations:vdn}

As a reminder, VDN action-values are defined as $\qvdn(\jh, \ja) \doteq \sum_i \qmodel_i(h_i, a_i)$.
Due to the the linear (monotonic) mixing structure, the joint maximal values $\vvdn(\jh)$ can be expressed as the sum of the individual maximal values,
\begin{align}
  \vvdn(\jh) &\doteq \max_{\ja} \qvdn(\jh, \ja) \nonumber \\
  &= \max_{a_1, \ldots, a_N} \sum_i \qmodel_i(h_i, a_i) \nonumber \\
  &= \sum_i \max_{a_i} \qmodel_i(h_i, a_i) & \text{(monotonicity)} \nonumber \\
  \label{eq:vvdn}
  &= \sum_i \vmodel_i(h_i) \,,
  \intertext{and the joint advantages $\avdn(\jh, \ja)$ can be expressed as the sum of the individual advantages,}
  \avdn(\jh, \ja) &\doteq \qvdn(\jh, \ja) - \vvdn(\jh) \nonumber \\
  &= \sum_i \qmodel_i(h_i, a_i) - \sum_i \vmodel_i(h_i) \nonumber \\
  &= \sum_i \qmodel_i(h_i, a_i) - \vmodel_i(h_i) \nonumber \\
  \label{eq:avdn}
  &= \sum_i \amodel_i(h_i, a_i) \,.
\end{align}

\subsection{QMIX maximal values and advantages}
\label{sec:derivations:qmix}

As a reminder, QMIX action-values are defined as $\qmix(\jh, \ja) \doteq \fmono\left( q_1, \ldots, q_N \right)$.
Due to the monotonic mixing structure, the joint maximal values $\vmix(\jh)$ can be expressed as the monotonic mixing of the individual maximal values,
\begin{align}
  \vmix(\jh) &\doteq \max_{\ja} \qmix(\jh, \ja) \nonumber \\
  &= \max_{a_1, \ldots, a_N} \fmono\left( \qmodel_1(h_1, a_1), \ldots, \qmodel_N(h_N, a_N) \right) \nonumber \\
  &= \fmono\left( \max_{a_1} \qmodel_1(h_1, a_1), \ldots, \max_{a_N} \qmodel_N(h_N, a_N) \right) & \text{(monotonicity)} \nonumber \\
  &= \fmono\left( \vmodel_1(h_1), \ldots, \vmodel_N(h_N) \right) \nonumber \\
  \label{eq:vmix}
  &= \fmono\left( v_1, \ldots, v_N \right) \,,
  \intertext{and the joint advantages $\amix(\jh, \ja)$ can be expressed as the corresponding difference,}
  \label{eq:amix}
  \amix(\jh, \ja) &\doteq \qmix(\jh, \ja) - \vmix(\jh) \nonumber \\
                  &= \fmono\left( q_1, \ldots, q_N \right) - \fmono\left( v_1, \ldots, v_N \right) \,.
\end{align}

\subsection{QFIX-sum}
\label{sec:derivations:qfix-sum}

QFIX-sum is an instance of QFIX based on VDN as fixee model, $\qfixee(\jh, \ja) = \qvdn(\jh, \ja)$.
From \cref{eq:avdn}, we have that the VDN joint advantage is given as the sum of individual advantages (hence the ``-sum'' suffix).
Therefore, QFIX-sum is simply obtained as
\begin{align}
  \qfixsum(\jh, \ja) &\doteq w(\jh, \ja) \avdn(\jh, \ja) + b(\jh) \nonumber \\
  &= w(\jh, \ja) \sum_i \amodel_i(h_i, a_i) + b(\jh) \,.
\end{align}

\subsection{QFIX-mono}
\label{sec:derivations:qfix-mono}

QFIX-mono is an instance of QFIX based on QMIX as fixee model, $\qfixee(\jh, \ja) = \qmix(\jh, \ja)$.
From \cref{eq:amix}, we have that the QMIX advantage is given as a difference between monotonic compositions of individual utilities (hence the ``-mono'' suffix).
Therefore, QFIX-mono is simply obtained as
\begin{align}
  \qfixmono(\jh, \ja) &\doteq w(\jh, \ja) \amix(\jh, \ja) + b(\jh) \nonumber \\
                      &= w(\jh, \ja) ( \fmono(q_1, \ldots, q_N) - \fmono(v_1, \ldots, v_N) ) + b(\jh) \,.
\end{align}

\subsection{Q+FIX-sum}
\label{sec:derivations:q+fix-sum}

\begin{figure*}
  \centering

  \begin{subfigure}{0.49\linewidth}
    \centering
    \includegraphics[width=\linewidth]{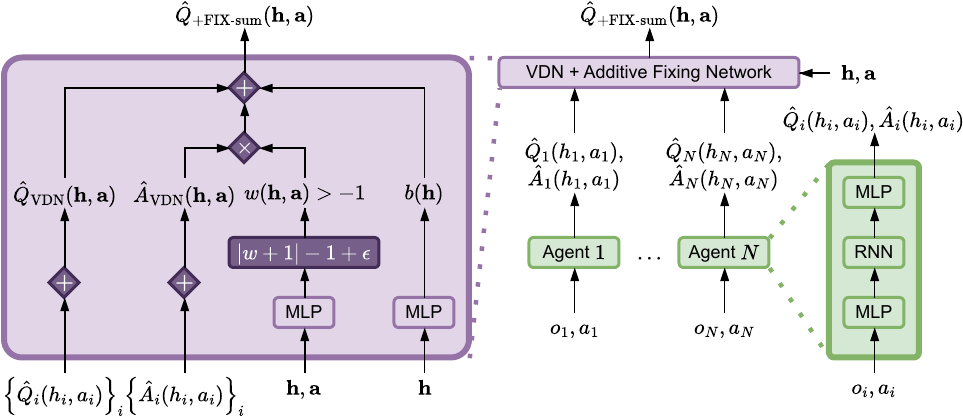}
    \caption{Q+FIX-sum diagram.}
    \label{fig:q+fix-sum}
  \end{subfigure}
  \begin{subfigure}{0.49\linewidth}
    \centering
    \includegraphics[width=\linewidth]{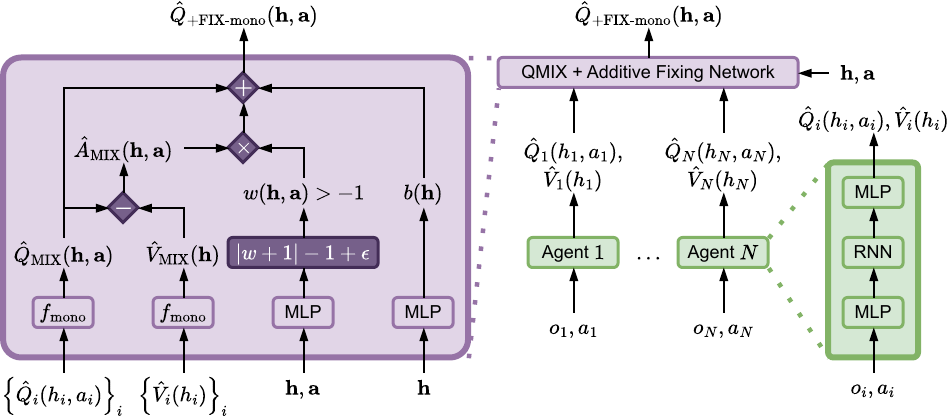}
    \caption{Q+FIX-mono diagram.}
    \label{fig:q+fix-mono}
  \end{subfigure}
  
  \begin{subfigure}{\linewidth}
    \centering
    \includegraphics[width=0.49\linewidth]{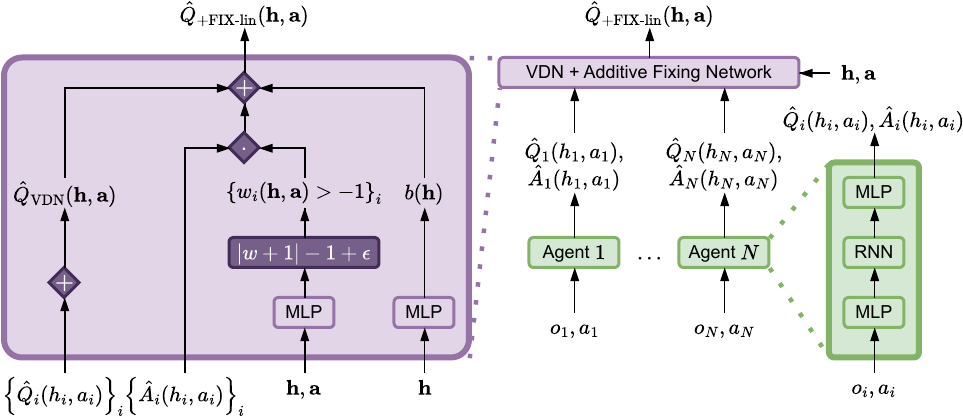}
    \caption{Q+FIX-lin diagram.}
    \label{fig:q+fix-lin}
  \end{subfigure}

  \caption{Specialized diagrams for Q+FIX-sum, Q+FIX-mono, and Q+FIX-lin.}
  \label{fig:q+fix:specializations}
\end{figure*}

Q+FIX-sum is an instance of Q+FIX based on VDN as fixee model, $\qfixee(\jh, \ja) = \qvdn(\jh, \ja)$ and $\afixee(\jh, \ja) = \avdn(\jh, \ja)$, also equivalent to the additive formulation of QFIX-sum.
Therefore, Q+FIX-sum is simply obtained as
\begin{align}
\aqfixsum{} &\doteq \qvdn(\jh, \ja) + w(\jh, \ja) \avdn(\jh, \ja) + b(\jh) \nonumber \\
&\doteq \sum_i \qmodel_i(\jh, \ja) + w(\jh, \ja) \sum_i \amodel_i(\jh, \ja) + b(\jh) \,.
\end{align}
\Cref{fig:q+fix-sum} shows a graphical diagram for Q+FIX-sum.

\subsection{Q+FIX-mono}
\label{sec:derivations:q+fix-mono}

Q+FIX-mono is an instance of Q+FIX based on QMIX as fixee model, $\qfixee(\jh, \ja) = \qmix(\jh, \ja)$ and $\afixee(\jh, \ja) = \amix(\jh, \ja)$, also equivalent to the additive formulation of QFIX-mono.
Therefore, Q+FIX-mono is simply obtained as
\begin{align}
\aqfixmono{} &\doteq \qvdn(\jh, \ja) + w(\jh, \ja) \avdn(\jh, \ja) + b(\jh) \nonumber \\
&\doteq \fmono(q_1, \ldots, q_N) + w(\jh, \ja) \left( \fmono(q_1, \ldots, q_N) - \fmono(v_1, \ldots, v_N) \right) + b(\jh) \,.
\end{align}
\Cref{fig:q+fix-mono} shows a graphical diagram for Q+FIX-mono.

\subsection{Q+FIX-lin}
\label{sec:derivations:q+fix-lin}

Q+FIX-lin is the additive formulation of QFIX-lin.
Just as QFIX-lin is not formally a member of the QFIX family, but rather a generalization of QFIX-sum, so is Q+FIX-lin not formally a member of Q+FIX, but rather a generalization of Q+FIX-sum.
Given that QFIX-lin is obtained by introducing per-agent weights $w_i(\jh, \ja)$, Q+FIX-lin is simply obtained as
\begin{align}
\aqfixlin{} &\doteq \sum_i \qmodel_i(h_i, a_i) + \sum_i w_i(\jh, \ja) \amodel_i(h_i, a_i) + b(\jh) \nonumber \,.
\end{align}
\Cref{fig:q+fix-lin} shows a graphical diagram for Q+FIX-lin.

\section{Why does detaching the advantages help Q+FIX?}
\label{sec:detach-hypothesis}

First, we note that the gradients $\nabla_{\theta_i} \aqfix(\jh, \ja)$ when the advantages \emph{are not} detached are
\begin{align}
  \nabla_{\theta_i} \aqfix(\jh, \ja) &= \nabla_{\theta_i} \qfixee(\jh, \ja) + w(\jh, \ja) \nabla_{\theta_i} \afixee(\jh, \ja) \nonumber \\
  &= \nabla_{\theta_i} \vfixee(\jh) + ( w(\jh, \ja) + 1 ) \nabla_{\theta_i} \afixee(\jh, \ja) \,.
\end{align}

It seems plausible that there may be values of $w(\jh, \ja)$ that could result in non-ideal gradient signals.
For example, a low fixing weight $w(\jh, \ja) \approx -1$ results in a dampened gradient $\nabla_{\theta_i} \aqfix(\jh, \ja) \approx \nabla_{\theta_i} \vfixee(\jh)$, that is notably independent on actions.
On the other end of the spectrum, a very large fixing weight $w(\jh, \ja) \gg -1$ results in a gradient that is dominated by the highly-weighted advantage component, overcoming the value component, $\nabla_{\theta_i} \aqfix(\jh, \ja) \approx w(\jh, \ja) \nabla_{\theta_i} \afixee(\jh, \ja)$.
On each end of the spectrum, the gradient will propagate almost exclusively through the values $\nabla_{\theta_i} \vfixee(\jh)$ or through the advantages $\nabla_{\theta_i} \afixee(\jh, \ja)$.

On the other hand, the gradients $\nabla_{\theta_i} \aqfix(\jh, \ja)$ when the advantages \emph{are} detached are
\begin{align}
  \nabla_{\theta_i} \aqfix(\jh, \ja) &= \nabla_{\theta_i} \qfixee(\jh, \ja) \nonumber \\
  &= \nabla_{\theta_i} \vfixee(\jh) + \nabla_{\theta_i} \afixee(\jh, \ja) \,,
\end{align}
and are invariant to the fixing structure, equally dependent on the value and advantage components.

\section{Stateful QFIX}
\label{sec:appendix:stateful-qfix}

In this section, we extend some of the theory of QFIX to the stateful case.
As mentioned in the main document, we consider two cases of stateful QFIX, a \emph{history-state} case and \emph{state-only} case, which differ in what information is provided to the fixing network.
The derivations and proofs will follow closely those of the stateless case, although not all conclusions will transfer to all stateful cases.
Primarily, we will find that state-only QFIX (like other state-only variants of other methods) is not able to represent the full IGM-complete space of value functions.

\subsection{History-state QFIX}
\label{sec:appendix:history-state-qfix}

Consider a history-state variant of $\qigm$ from \cref{eq:qigm} defined as follows,
\begin{align}
    \qigm(\jh, s, \ja) \doteq w(\jh, s, \ja) f(u_1, \ldots, u_N) + b(\jh, s) \,,
\end{align}
where $u_i$ and $f$ are defined as in \cref{sec:qigm}, $w \colon\jhset\times\sset\times\jaset \to\realset_{>0}$ is an arbitrary positive function of joint history, state, and joint action, $b \colon\jhset\times\sset \to\realset$ is an arbitrary function of joint history and state.
As in the stateless case, $\qigm(\jh, s, \ja)$ denotes a relationship where any deviation from individual maximality is transformed into an arbitrary deviation from joint maximality.

\begin{proposition}
    \label{thm:history-state-qigm:igm}
    For any $f$, $w$, and $b$, values $\{ Q_i \}\fori$ and $\qigm$ satisfy stateful-IGM.
\end{proposition}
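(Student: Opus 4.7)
The plan is to reduce the history-state case to the stateless case already handled by \cref{thm:qigm:igm}. The key observation is that the individual utilities $u_i = A_i(h_i, a_i)$ depend only on $h_i$ and $a_i$, and therefore $f(u_1, \ldots, u_N)$ is measurable with respect to $\jh$ and $\ja$ alone, with no dependence on the state $s$. Consequently, when taking the expectation $\Exp_{s\mid\jh}$, the term $f(u_1, \ldots, u_N)$ acts as a constant and can be pulled out of the expectation.

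Concretely, I would first write
\begin{equation}
    \Exp_{s\mid\jh}\left[ \qigm(\jh, s, \ja) \right] = \Exp_{s\mid\jh}\left[ w(\jh, s, \ja) \right] f(u_1, \ldots, u_N) + \Exp_{s\mid\jh}\left[ b(\jh, s) \right] \,,
\end{equation}
and define $W(\jh, \ja) \doteq \Exp_{s\mid\jh}\left[ w(\jh, s, \ja) \right]$ and $B(\jh) \doteq \Exp_{s\mid\jh}\left[ b(\jh, s) \right]$. Because $w(\jh, s, \ja) > 0$ pointwise, we immediately get $W(\jh, \ja) > 0$, and $B$ is a well-defined function of joint history alone. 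Therefore the expectation takes the exact form of the stateless $\qigm$ construction from \cref{eq:qigm}:
\begin{equation}
    \Exp_{s\mid\jh}\left[ \qigm(\jh, s, \ja) \right] = W(\jh, \ja) f(u_1, \ldots, u_N) + B(\jh) \,.
\end{equation}

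At this point, I would invoke \cref{thm:qigm:igm} directly on the right-hand side, viewed as a stateless IGM-compliant composition with weight $W$, bias $B$, and transformation $f$ satisfying the required non-positivity and zero-iff-all-zero property. This yields $\bigtimes_i \argmax_{a_i} Q_i(h_i, a_i) = \argmax_\ja \Exp_{s\mid\jh}\left[ \qigm(\jh, s, \ja) \right]$, which is exactly the definition of stateful-IGM from \cref{thm:stateful-igm}.

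The only real subtlety, and the step I would be most careful with, is justifying the interchange of the expectation and the argmax structure: specifically, confirming that $W(\jh, \ja)$ inherits strict positivity from $w$ (this follows from $w > 0$ almost surely under any reasonable conditional distribution over $s$), and confirming that $f(u_1, \ldots, u_N)$ truly has no implicit dependence on $s$ via the utilities (which is immediate since individual utilities are functions of $(h_i, a_i)$ only by assumption). Beyond these bookkeeping points, the proof is an essentially immediate reduction, which is why no full IGM-completeness claim is asserted here -- the state-only variant discussed later is where the expressive-power analysis becomes nontrivial.
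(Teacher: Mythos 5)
Your proposal is correct and rests on the same two facts as the paper's proof: that $f(u_1, \ldots, u_N)$ has no dependence on the state, and that the positivity of $w$ survives whatever happens with the state. The paper organizes this slightly differently --- it first shows $\ja\opt$ maximizes $\qigm(\jh, s, \ja)$ pointwise for every fixed $s$ (repeating the stateless argument verbatim) and then observes that pointwise maximality is preserved under marginalization over $s \mid \jh$ --- whereas you marginalize first, pull $f(u_1, \ldots, u_N)$ out by linearity, and invoke the stateless proposition as a black box on the reduced form $W(\jh,\ja) f(u_1,\ldots,u_N) + B(\jh)$; both routes are valid and essentially equivalent.
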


\begin{proof}
    This proof follows the same structure as that for \cref{thm:qigm:igm}.
    
    For any given joint history $\jh$, let $a\opt_i = \argmax_{a_i} Q_i(h_i, a_i)$ denote the maximal action according to the individual utilities, and $\ja\opt = (a\opt_i, \ldots, a\opt_N)$ the joint action constructed by those individual actions.
    We prove that $\qigm$ satisfies stateful-IGM in two steps:
    
    \begin{enumerate}
        \item $\ja\opt = \argmax_\ja \qigm(\jh, s, \ja)$, i.e., 
        the individual maximal actions also maximize the joint history-state values.
        \item $\ja\opt = \argmax_\ja \Exp_{s\mid \jh}\left[ \qigm(\jh, s, \ja) \right]$, i.e., the individual maximal actions also maximize the marginalized joint history-state values.
    \end{enumerate}

    \paragraph{Step 1.}
    
    The advantage utilities corresponding to $\ja\opt$ are zero $\forall i (u_i = 0)$ by definition, and
    \begin{align}
        \label{eq:history-state-qigm}
        \qigm(\jh, s, \ja\opt) &= w(\jh, s, \ja\opt) \underbrace{ f(u_1, \ldots, u_N) }_{=0} + b(\jh, s) \nonumber \\
        &= b(\jh, s) \,.
    \end{align}

    For any other non-maximal action $\ja$, we have at least one strictly negative utility $\exists i (u_i < 0)$, and
    \begin{align}
        \qigm(\jh, s, \ja\opt) &= \underbrace{ w(\jh, s, \ja\opt) }_{>0} \underbrace{ f(u_1, \ldots, u_N) }_{<0} + b(\jh, s) \nonumber \\
        &< b(\jh, s) \,.
    \end{align}

    Therefore, $\ja\opt = \argmax_\ja \qigm(\jh, s, \ja)$, and the actions that maximize the individual utilities also maximize the joint history-state value.
    
    \paragraph{Step 2.}

    Note that $\ja\opt = \argmax_\ja \qigm(\jh, s, \ja)$ is valid for any state, at the very least because $\ja\opt$ are defined via the stateless individual utilities.

    If $\ja\opt$ maximizes the joint history-state values for any given state, then it also maximizes the joint history-state values when marginalized over any distribution of state $p\in\Delta\sset$, and $\ja\opt = \argmax_\ja \Exp_{s\sim p}\left[ \qigm(\jh, s, \ja) \right]$.
    This must be true also for the specific distribution $p(s) \doteq \Pr(s\mid \jh)$, and $\ja\opt = \argmax_\ja \Exp_{s\mid \jh}\left[ \qigm(\jh, s, \ja) \right]$.

    Therefore, the same actions $\ja\opt$ that maximize the individual utilities, also maximize the marginalized joint history-state values, satisfying the definition of stateful-IGM in \cref{thm:stateful-igm}.
    
\end{proof}

When it comes to a stateful form of IGM-complete function class, we must be very clear as to what it is that we are able to prove.
We are not able to prove that $\qigm(\jh, s, \ja)$ covers the whole stateful-IGM function class of values that satisfy stateful-IGM (we do not believe this is possible, though we will not go into that amount of detail here).
Instead, we prove that the projected space of \emph{stateless} values obtained by marginalizing the stateful values via $\Exp_{s\mid \jh}\left[ \qigm(\jh, s, \ja) \right]$ is the IGM-complete function class.

\begin{proposition}
    \label{thm:history-state-qigm:igm-complete}
    
    For any $f$, and given free choice of $w$ and $b$, the function class of $\{ Q_i \}\fori$ and projected $\Exp_{s\mid h}\left[ \qigm \right]$ is IGM-complete.
\end{proposition}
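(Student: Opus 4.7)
The plan is to prove the proposition by mutual inclusion, closely mirroring the structure of the proof of \cref{thm:qigm:igm} in the stateless case. Concretely, letting $\fc(\Exp_{s\mid \jh}[\qigm])$ denote the function class realizable by the projected model and $\fcigm$ the IGM-complete class, I want to show both $\fc(\Exp_{s\mid \jh}[\qigm]) \subseteq \fcigm$ and $\fcigm \subseteq \fc(\Exp_{s\mid \jh}[\qigm])$.

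The first inclusion is essentially already in hand. By \cref{thm:history-state-qigm:igm}, $\qigm(\jh, s, \ja)$ together with the individual utilities satisfies stateful-IGM, which by \cref{thm:stateful-igm} precisely states that $\bigtimes_i \argmax_{a_i} Q_i(h_i, a_i) = \argmax_\ja \Exp_{s\mid \jh}[\qigm(\jh, s, \ja)]$. This says exactly that the projected value $\Exp_{s\mid \jh}[\qigm]$, paired with $\{Q_i\}\fori$, satisfies IGM in the stateless sense, giving the first inclusion.

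For the second inclusion, the key observation is that we have the freedom to choose $w$ and $b$ that \emph{ignore} the state argument. Given any target individual utilities $\{Q_i(h_i, a_i)\}\fori$ and joint values $Q(\jh, \ja)$ satisfying IGM, I would set $w(\jh, s, \ja) \doteq \tilde w(\jh, \ja)$ and $b(\jh, s) \doteq \tilde b(\jh)$, where $\tilde w$ and $\tilde b$ are exactly the constructions used in the proof of \cref{thm:qigm:igm}, namely $\tilde b(\jh) = V(\jh)$ and $\tilde w(\jh, \ja) = A(\jh, \ja) / f(u_1, \ldots, u_N)$ (arbitrary when the denominator vanishes). Since these choices are $s$-independent, the expectation collapses and
\begin{equation*}
\Exp_{s\mid \jh}\bigl[\qigm(\jh, s, \ja)\bigr] = \tilde w(\jh, \ja) f(u_1, \ldots, u_N) + \tilde b(\jh),
\end{equation*}
which by the stateless proof equals $Q(\jh, \ja)$ pointwise. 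This establishes the second inclusion and completes the argument.

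There is no real obstacle here: the crucial move is conceptual, not technical, and amounts to noticing that the projected model's function class is a superset of the stateless $\qigm$'s function class (by restricting $w, b$ to be $s$-independent) and a subset of it cannot help violate IGM (by the stateful-IGM proof). The only minor subtlety worth flagging explicitly is that the proposition speaks about the \emph{projected} model, not the unprojected $\qigm(\jh, s, \ja)$ itself — we are characterizing what the marginalized model can represent, not claiming that the stateful model space is itself IGM-complete in some pointwise-in-$s$ sense, which would be a much stronger and unnecessary claim.
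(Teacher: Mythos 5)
Your proposal is correct and follows essentially the same route as the paper's own proof: both establish the first inclusion via the stateful-IGM result and the second by choosing state-independent $w(\jh,s,\ja)$ and $b(\jh,s)$ equal to the stateless constructions $A(\jh,\ja)/f(u_1,\ldots,u_N)$ and $V(\jh)$, so that $\qigm(\jh,s,\ja)=Q(\jh,\ja)$ pointwise and the marginalization is trivial. Your closing remark about characterizing only the projected function class (rather than the stateful space pointwise in $s$) matches the caveat the paper itself makes before stating the proposition.
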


\begin{proof}
    This proof follows the same structure as that for \cref{thm:qigm:igm}, although we consider the projected space stateless values $\Exp_{s\mid \jh}\left[ \qigm(\jh, s, \ja) \right]$ obtained from the stateful values $\qigm(\jh, s, \ja)$.

    Let us denote the projected function class of $\qigm$ as $\fc(\qigm)$, and the stateful IGM-complete function class as $\fcigm$.
    We prove the equivalence $\fc(\qigm) = \fcigm$ in two steps:

    \paragraph{Step 1.}
  
    $Q \in\fc(\qigm) \implies Q \in\fcigm$ follows directly from \cref{thm:history-state-qigm:igm}.
  
    \paragraph{Step 2.}
  
    Let $Q_i(h_i, a_i)$ and $Q(\jh, \ja)$ denote an arbitrary set of individual and joint values that satisfy IGM, i.e., $Q\in \fcigm$.
    Let us denote the usual corresponding values and advantages as follows,
    \begin{align}
      V_i(h_i) &= \max_{a_i} Q_i(h_i, a_i) \,, &
      A_i(h_i, a_i) &= Q_i(h_i, a_i) - V_i(h_i) \,, \\
      \intertext{\emph{but}, let us define a different notion of joint values and advantages for this history-state case (note the stateless $V$, stateful $A$),}
      V(\jh) &= \max_{\ja} Q(\jh, \ja) \,, &
      A(\jh, \ja) &= Q(\jh, \ja) - V(\jh) \,,
    \end{align}
    with the usual shorthand $q_i = Q_i(h_i, a_i)$ and $v_i = V_i(h_i)$, and $u_i = A_i(h_i, a_i)$.
  
    For any $f$ that satisfies the requirements of \cref{eq:history-state-qigm}, let $w$ and $b$ be defined as follows,
    \begin{align}
      b(\jh, s) &= V(\jh) \,, \\
      w(\jh, s, \ja) &= \begin{cases}
        \frac{ A(\jh, \ja) }{ f(u_1, \ldots, u_N) } \,, & \text{if } f(u_1, \ldots, u_N) \neq 0 \,, \\
        \text{any value} \,, & \text{otherwise} \,.
      \end{cases}
    \end{align}

    These definitions effectively create stateful values $\qigm(\jh, s, \ja)$ that are state-independent, and functionally equivalent to stateless values $\qigm(\jh, \ja)$.
    Although this appears to be a severe misuse of the additional state information, it is sufficient to prove the claim that the projected space of stateless values obtained via marginalization $\Exp_{s\mid \jh}\left[ \qigm(\jh, \ja) \right]$ is IGM-complete.
    It's easy to see that the rest of the proof can not proceed as in \cref{thm:qigm:igm}.
  
    For any given joint history $\jh$, let $a\opt_i = \argmax_{a_i} Q_i(h_i, a_i)$ denote the maximal action according to the individual utilities, and $\ja\opt = (a\opt_1, \ldots, a\opt_N)$ the corresponding joint action.
    Given that $Q$ satisfies IGM by assumption, we have $\ja\opt = \argmax_{\ja} Q(\jh, \ja)$, and $Q(\jh, \ja\opt) = \max_{\ja} Q(\jh, \ja) = V(\jh)$.
  
    For this joint action $\ja\opt$, the corresponding individual advantage utilities are zero $\forall i \, (u_i = 0)$ by definition, and
    \begin{align}
      \qigm(\jh, s, \ja\opt) &= w(\jh, s, \ja\opt) f(u_1, \ldots, u_N) + b(\jh, s) \nonumber \\
                          &= w(\jh, s, \ja\opt) \underbrace{ f(0, \ldots, 0) }_{=0} + b(\jh, s) \nonumber \\
                          &= V(\jh) \nonumber \\
                          &= Q(\jh, \ja\opt) \,.
    \end{align}
  
    For any other non-maximal action $\ja\D$, we have at least one strictly negative utility $\exists i \, (u_i < 0)$, and
    \begin{align}
      \qigm(\jh, s, \ja\D) &= w(\jh, s, \ja\D) f(u_1, \ldots, u_N) + b(\jh, s) \nonumber \\
                        &= \frac{ A(\jh, \ja\D) }{ f(u_1, \ldots, u_N) } f(u_1, \ldots, u_N) + V(\jh) \nonumber \\
                        &= A(\jh, \ja\D) + V(\jh) \nonumber \\
                        &= Q(\jh, \ja\D) \,.
    \end{align}
  
    In either case, $\qigm(\jh, s, \ja) = Q(\jh, \ja)$ for all joint histories, states, and actions, which trivially implies $\Exp_{s\mid \jh}\left[ \qigm(\jh, s, \ja) \right] = Q(\jh, \ja)$.
    Therefore $Q \in\fcigm \implies Q \in\fc(\qigm)$.
    
\end{proof}

\subsection{State-only QFIX}
\label{sec:appendix:state-only-qfix}

Consider a state-only variant of $\qigm$ from \cref{eq:qigm} defined as follows,
\begin{align}
    \qigm(\jh, s, \ja) \doteq w(s, \ja) f(u_1, \ldots, u_N) + b(s) \,,
\end{align}
where $u_i$ and $f$ are defined as in \cref{sec:qigm}, $w \colon\sset\times\jaset \to\realset_{>0}$ is an arbitrary positive function of joint history, state, and joint action, $b \colon\sset \to\realset$ is an arbitrary function of joint history and state.
As in the stateless case, $\qigm(\jh, s, \ja)$ denotes a relationship where any deviation from individual maximality is transformed into an arbitrary deviation from joint maximality.
Note that the name \emph{state-only} refers moreso to the fixing models $w, b$, than the values as a whole that remain at least in part history-based due to the dependence on the individual history-based utilities.

\begin{proposition}
    \label{thm:state-only-qigm:igm}
    For any $f$, $w$, and $b$, values $\{ Q_i \}\fori$ and $\qigm$ satisfy stateful-IGM.
\end{proposition}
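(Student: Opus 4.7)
The plan is to adapt the structure of the proof of \cref{thm:history-state-qigm:igm} to the state-only case, since the key mechanism (positivity of $w$ combined with the sign behavior of $f$) is exactly the same, merely with a restricted dependency on inputs. The argument will again be a two-step approach: first establish that for every state $s$ the individual-maximizers also jointly-maximize the stateful values, then lift this to the marginalized values required by stateful-IGM.

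Concretely, I would fix an arbitrary joint history $\jh$, let $a_i\opt \in \argmax_{a_i} Q_i(h_i, a_i)$, and set $\ja\opt = (a_1\opt, \ldots, a_N\opt)$. For this $\ja\opt$ we have $u_i = 0$ for all $i$, hence $f(u_1, \ldots, u_N) = 0$ by the defining property of $f$, so that $\qigm(\jh, s, \ja\opt) = b(s)$ for every $s$. For any non-maximal $\ja$, at least one $u_i < 0$, so $f(u_1, \ldots, u_N) < 0$, and since $w(s, \ja) > 0$ by assumption, we obtain $\qigm(\jh, s, \ja) < b(s)$ for every $s$. Therefore $\ja\opt \in \argmax_{\ja} \qigm(\jh, s, \ja)$ pointwise in $s$.

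The second step is standard: pointwise maximization by $\ja\opt$ over $s$ transfers to maximization of any expectation over $s$, in particular the belief-conditional expectation $\Exp_{s\mid \jh}\left[ \qigm(\jh, s, \ja) \right]$. Explicitly, for every $\ja$,
\begin{equation}
    \Exp_{s\mid \jh}\left[ \qigm(\jh, s, \ja\opt) - \qigm(\jh, s, \ja) \right] = \Exp_{s\mid \jh}\left[ b(s) - \qigm(\jh, s, \ja) \right] \geq 0 \,,
\end{equation}
with equality iff $\ja$ itself is also individually-maximal. Hence $\bigtimes_i \argmax_{a_i} Q_i(h_i, a_i) = \argmax_{\ja} \Exp_{s\mid \jh}\left[ \qigm(\jh, s, \ja) \right]$, which is exactly the stateful-IGM condition of \cref{thm:stateful-igm}.

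I do not expect a serious obstacle here, precisely because restricting $w$ and $b$ to depend only on $s$ (rather than on $(\jh, s)$) has no effect on the argument that $\ja\opt$ is maximal for every fixed $s$: what matters is the sign structure of $w \cdot f$ and the fact that $b(s)$ does not depend on $\ja$. The genuine limitation of the state-only variant appears only when one then attempts to prove IGM-completeness (not stated in this proposition), because the state-only $w$ and $b$ cannot in general reproduce arbitrary history-indexed targets $A(\jh, \ja)$ and $V(\jh)$; this is consistent with the remark made in \cref{sec:stateful-qfix} that state-only QFIX fails to achieve the IGM-complete function class.
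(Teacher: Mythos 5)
Your proof is correct and follows essentially the same two-step argument as the paper's own proof: first showing that the individually-maximal joint action maximizes $\qigm(\jh, s, \ja)$ pointwise in $s$ via the sign structure of $w \cdot f$, then lifting this to the belief-conditional expectation required by stateful-IGM. Your explicit treatment of the equality case even makes the argmax set equality slightly more transparent than the paper's version.
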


\begin{proof}
    This proof follows the same structure as that for \cref{thm:qigm:igm}.
    
    For any given joint history $\jh$, let $a\opt_i = \argmax_{a_i} Q_i(h_i, a_i)$ denote the maximal action according to the individual utilities, and $\ja\opt = (a\opt_i, \ldots, a\opt_N)$ the joint action constructed by those individual actions.
    We prove that $\qigm$ satisfies stateful-IGM in two steps:
    
    \begin{enumerate}
        \item $\ja\opt = \argmax_\ja \qigm(\jh, s, \ja)$, i.e., 
        the individual maximal actions also maximize the state-only values.
        \item $\ja\opt = \argmax_\ja \Exp_{s\mid \jh}\left[ \qigm(\jh, s, \ja) \right]$, i.e., the individual maximal actions also maximize the marginalized joint state-only values.
    \end{enumerate}

    \paragraph{Step 1.}
    
    The advantage utilities corresponding to $\ja\opt$ are zero $\forall i (u_i = 0)$ by definition, and
    \begin{align}
        \qigm(\jh, s, \ja\opt) &= w(s, \ja\opt) \underbrace{ f(u_1, \ldots, u_N) }_{=0} + b(s) \nonumber \\
        &= b(s) \,.
    \end{align}

    For any other non-maximal action $\ja$, we have at least one strictly negative utility $\exists i (u_i < 0)$, and
    \begin{align}
        \qigm(\jh, s, \ja\opt) &= \underbrace{ w(s, \ja\opt) }_{>0} \underbrace{ f(u_1, \ldots, u_N) }_{<0} + b(s) \nonumber \\
        &< b(s) \,.
    \end{align}

    Therefore, $\ja\opt = \argmax_\ja \qigm(\jh, s, \ja)$, and the actions that maximize the individual utilities also maximize the joint state-only value.
    
    \paragraph{Step 2.}

    Note that $\ja\opt = \argmax_\ja \qigm(\jh, s, \ja)$ is valid for any state, at the very least because $\ja\opt$ are defined via the stateless individual utilities.

    If $\ja\opt$ maximizes the joint history-state values for any given state, then it also maximizes the joint history-state values when marginalized over any distribution of state $p\in\Delta\sset$, and $\ja\opt = \argmax_\ja \Exp_{s\sim p}\left[ \qigm(\jh, s, \ja) \right]$.
    This must be true also for the specific distribution $p(s) \doteq \Pr(s\mid \jh)$, and $\ja\opt = \argmax_\ja \Exp_{s\mid \jh}\left[ \qigm(\jh, s, \ja) \right]$.

    Therefore, the same actions $\ja\opt$ that maximize the individual utilities, also maximize the marginalized joint history-state values, satisfying the definition of stateful-IGM in \cref{thm:stateful-igm}.
    
\end{proof}

In contrast to history-state QFIX in \cref{sec:appendix:history-state-qfix}, we are not able to prove that state-only QFIX is able to represent the complete function class of IGM values.

\section{Evaluation details and additional results}

\subsection{SMACv2}
\label{sec:additional-results:smacv2}

\paragraph{Implementation details}

We note that \pymarl\
provides \emph{stateful} implementations of QMIX and QPLEX.
For QPLEX in particular, this means that state-only weights $w_i(s)$ and $\lambda_i(s, \ja)$ are employed.
As discussed by \citet{marchesini_stateful_2024}, the state-only implementation of QPLEX loses some of the theoretical properties related to full IGM-completeness (and the same holds for Q+FIX, see \cref{sec:appendix:stateful-qfix}).
However, to maintain a fair comparison, our implementation of Q+FIX employs analogous stateful implementation with state-only weights $w(s, \ja)$ for Q+FIX-\{sum,mono\}, and $w_i(s, \ja)$ for Q+FIX-lin.
QPLEX and Q+FIX implementations both employ \emph{advantage detaching} as previously described.
For these SMACV2 experiments, we did not find it necessary to employ \emph{intervention annealing}.

\paragraph{Metrics}

SMACv2 logs various metrics pertaining to team performance, including the mean return and the mean winrate obtained as the ratio of episodes where the agents succeed in defeating the enemies.
Although the winrate is a common metric used in prior work (e.g., \citet{wang_qplex_2020} use the winrate in their SMACv1 evaluation), we have found that winrates induce a different ordering over performances, i.e., it is possible to obtain a higher winrate while achieving a lower return, and vice versa.
This indicates that the rewards of SMACv2 do not perfectly encode the task of defeating the enemies---a matter of reward design that is beyond the scope of this work.
Since returns are the metric that the methods are directly trained to maximize, we prioritize returns as our primary evaluation metric in the main document, but also provide winrate results in this appendix.

\paragraph{Winrate results}

\begin{figure}
  \begin{subfigure}{\linewidth}
    \centering
    \includegraphics[width=\linewidth]{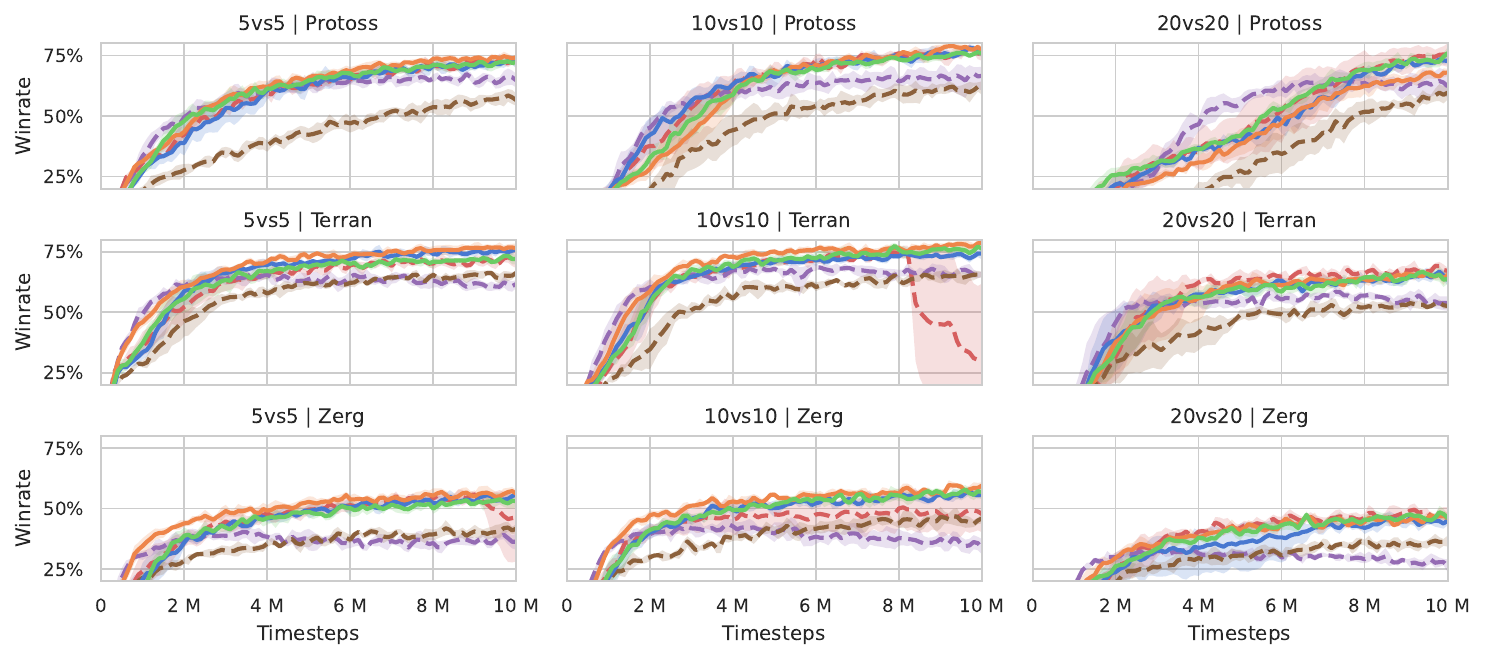}
    \caption{SMACv2 winrate mean ($5$ seeds).}
    \label{fig:results:smacv2:mean-winrates}
  \end{subfigure}

  \begin{subfigure}{0.44\linewidth}
    \centering
    \includegraphics[height=2.4cm]{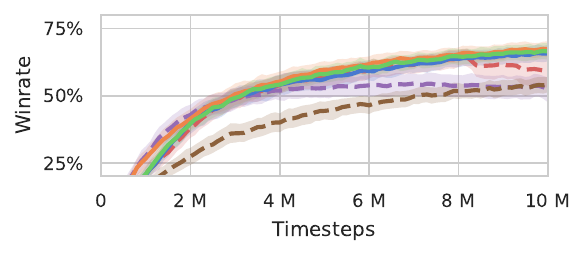}
    \caption{SMACv2 winrate mean aggregate ($45$ seeds).}
    \label{fig:results:smacv2:mean-winrates:aggregate}
  \end{subfigure}
  \begin{subfigure}{0.54\linewidth}
    \centering
    \includegraphics[height=2.4cm]{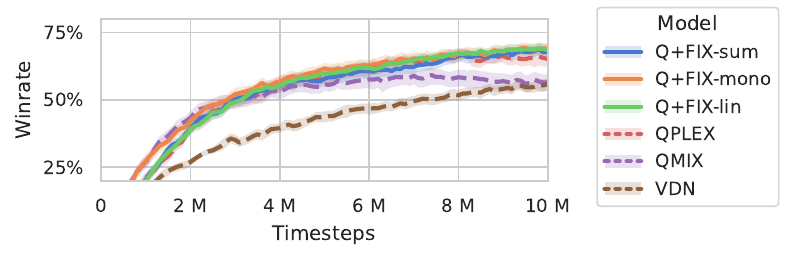}
    \caption{SMACv2 winrate IQM aggregate ($45$ seeds).}
    \label{fig:results:smacv2:iqm-winrates:aggregate}
  \end{subfigure}

  \caption{SMACv2 winrate results, bootstrapped $95\%$ CI.}
  \label{fig:results:smacv2:winrates}
\end{figure}

In this section, we show additional results based on the winrate metric.
As with the return-based results, we show the learning performance for each model and scenario in \cref{fig:results:smacv2:mean-winrates}, and the aggregate winrate across scenarios in \cref{fig:results:smacv2:mean-winrates:aggregate}.

\paragraph{Winrates vs returns}

As mentioned in the main document, the winrate and return metrics induce correlated but notably different orderings over the evaluated methods.
Comparing \cref{fig:results:smacv2:returns,fig:results:smacv2:winrates}, this is notable by the following (non-exhaustive) observations:
\begin{itemize}
    \item In \Terranv,
    \begin{itemize}
        \item Return indicates Q+FIX-sum $\succ$ Q+FIX-mono.
        \item Winrate indicates Q+FIX-sum $\prec$ Q+FIX-mono.
    \end{itemize}
    \item In \Zergv,
    \begin{itemize}
        \item Return indicates Q+FIX-sum $\succ$ Q+FIX-mono $\approx$ Q+FIX-lin.
        \item Winrate indicates Q+FIX-sum $\approx$ Q+FIX-mono $\approx$ Q+FIX-lin.
    \end{itemize}
    \item In \Zergx,
    \begin{itemize}
        \item Return indicates VDN $\approx$ Q+FIX.
        \item Winrate indicates VDN $\prec$ Q+FIX.
    \end{itemize}
    \item In \Protossxx,
    \begin{itemize}
        \item Return indicates VDN $\approx$ Q+FIX-mono.
        \item Winrate indicates VDN $\prec$ Q+FIX-mono.
    \end{itemize}
    \item In \Terranx, the return of QPLEX drops significantly around the $9M$ timestep mark, whereas its winrate is able to recover temporarily, indicating that high winrates are achievable even with low returns.
\end{itemize}
Comparing the final performances in \cref{fig:results:smacv2:mean-returns:aggregate,fig:results:smacv2:mean-winrates:aggregate},
\begin{itemize}
    \item Return indicates VDN $\prec$ QMIX $\prec$ QPLEX.
    \item Winrate indicates QPLEX $\prec$ VDN $\approx$ QMIX.
\end{itemize}

\paragraph{Winrate results discussion}

Despite the notable differences between returns and winrates as evaluation metrics, the winrate-based evaluation arrives to largely the same conclusions as the return-based one in the main document, with respect to the performance evaluation of Q+FIX compared to other baselines.

As in the return-based results, VDN fails to be a competitive baseline on its own for most scenarios, likely due to the well-known limited representation.
Fixing VDN via Q+FIX-sum, we are able to overcome this limitation (as noted by the performance gap between VDN and Q+FIX-sum), expanding its representation space and reaching SOTA performance.

As in the return-based results, QMIX sometimes exhibits fast initial learning speeds, albeit often to a sub-competitive final performance (\Protossv, \Terranv, \Terranx, \Zergx, \Terranxx, \Zergxx), again a likely consequence of its limited representation.
Fixing QMIX via Q+FIX-mono, we are often able to exploit the initial learning speeds and complement them with improved performance at convergence reaching SOTA performance.

Compared to return-based results, QPLEX appears less competitive, and performs very well in fewer scenarios (\Protossxx, \Terranxx, \Zergxx), and underperforms in more (\Terranv, \Zergx), and exhibits the same troubling convergence instabilities as well (\Zergv, \Terranx).
Q+FIX-lin, as the simplified variant inspired by QPLEX, manages to avoid such convergence instabilities, plausibly as a consequence of the simpler structure.

As in the return-based results, Q+FIX-sum, Q+FIX-mono, and Q+FIX-lin achieve similar learning performances in most cases, with only minor differences across scenarios.
Compared to the return-based results, it is Q+FIX-mono that may be slightly outperforming other variants in some scenarios (\Terranv, \Zergv).

The aggregate results in \cref{fig:results:smacv2:mean-winrates:aggregate,fig:results:smacv2:iqm-winrates:aggregate} largely confirm the trends discussed above.
Even when employing the IQM measure, which ignores the unstable QPLEX outlier rus, Q+FIX comes out as achieving higher performance.
Despite the concerning difference between the return and winrate metrics, both demonstrate that Q+FIX succeeds in enhancing the native performances of VDN and QMIX fixees, and lifts them to a similar level as QPLEX while maintaining more stable convergence.

\paragraph{Model size ablation}

One of the major appeals of Q+FIX over prior models is in its simplicity, and its ability to enhance prior models to achieve IGM-complete value function decomposition with small models.
Because Q+FIX operates by augmenting existing fixee models with additional models $w(\jh, \ja)$ and $b(\jh)$, there may be other concerns regarding whether the superior performance of Q+FIX comes simply as a consequence of the larger parameterization compared to the corresponding fixee.
\cref{tab:sizes:ablation} contains a complete list of mixer sizes.
Note that the mixer of Q+FIX-sum is always larger than that of VDN, and the mixer of Q+FIX-mono is always larger than that of QMIX.
Therefore, there is a potential concern that the performance of Q+FIX (compared to its corresponding fixee) is driven by the additional parameterization rather than other factors like its proven theoretical properties.

In this section, we present an ablation that disproves this concern by comparing the performance of a \emph{bigger} fixee with a corresponding Q+FIX variant that employs a \emph{smaller} fixee.
We note that this ablation is only possible for the case of QMIX and Q+FIX-mono:
\begin{enumerate*}[label=(\roman*)]
\item VDN has no mixing network;  therefore it is not possible to perform this ablation for VDN and Q+FIX-sum.
\item QPLEX is never used as a fixee;  therefore it is not possible to perform this ablation for QPLEX (also, the Q+FIX models are all significantly smaller than QPLEX to begin with).
\end{enumerate*}
Therefore, we implement a \emph{bigger} variant of QMIX (QMIX-big) and a \emph{smaller} variant of Q+FIX-mono (Q+FIX-mono-small).
See in \cref{tab:sizes:ablation} that the size of Q+FIX-mono-small is now both smaller than that of QMIX-big, and more comparable to those of Q+FIX-sum and Q+FIX-lin.

\cref{fig:results:smacv2:ablation} shows the results of this ablation evaluation; to focus on the matter at hand, we only show the relevant performance of QMIX and Q+FIX-mono methods.
As can be seen, the performance of Q+FIX-mono-small is analogous to that of +FIX-mono, and the performance of QMIX-big is analogous to that of QMIX.
These results strongly confirm that the superior performance of Q+FIX-mono is not caused by the larger parameterization, but by our proposed fixing structure.

\begin{table}
    \centering
    \caption{
    SMACv2 mixer sizes in number of parameters.
    Smallest (non-zero) models highlighted.
    }
    \label{tab:sizes:ablation}
    
    \begin{tabular}{lrrrrrr}
        & \multicolumn{3}{c}{\Protoss}
        & \multicolumn{3}{c}{\Terran, \Zerg} \\
        & \sizev & \sizex & \sizexx
        & \sizev & \sizex & \sizexx \\
        \toprule
        VDN
            & 0 k & 0 k & 0 k
            & 0 k & 0 k & 0 k \\
        \midrule
        QMIX
            & 38 k & 83 k & 201 k
            & 36 k & 79 k & 194 k \\
        QPLEX
            & 135 k & 326 k & 882 k
            & 126 k & 308 k & 846 k \\
        \rowcolor{gray!20}
        Q+FIX-sum
            & 20 k & 50 k & 138 k
            & 19 k & 48 k & 133 k \\
        Q+FIX-mono
            & 54 k & 180 k & 743 k
            & 50 k & 169 k & 708 k \\
        \rowcolor{gray!20}
        Q+FIX-lin
            & 21 k & 51 k & 140 k
            & 19 k & 48 k & 135 k \\
        \midrule
        QMIX-big
            & 166 k & 341 k & 767 k
            & 161 k & 331 k & 747 k \\
        Q+FIX-mono-small
            & 29 k & 83 k & 290 k
            & 27 k & 78 k & 277 k \\
        \bottomrule
    \end{tabular}
\end{table}

\begin{figure}
  \centering

  \begin{subfigure}{\linewidth}
    \centering
    \includegraphics[width=\linewidth]{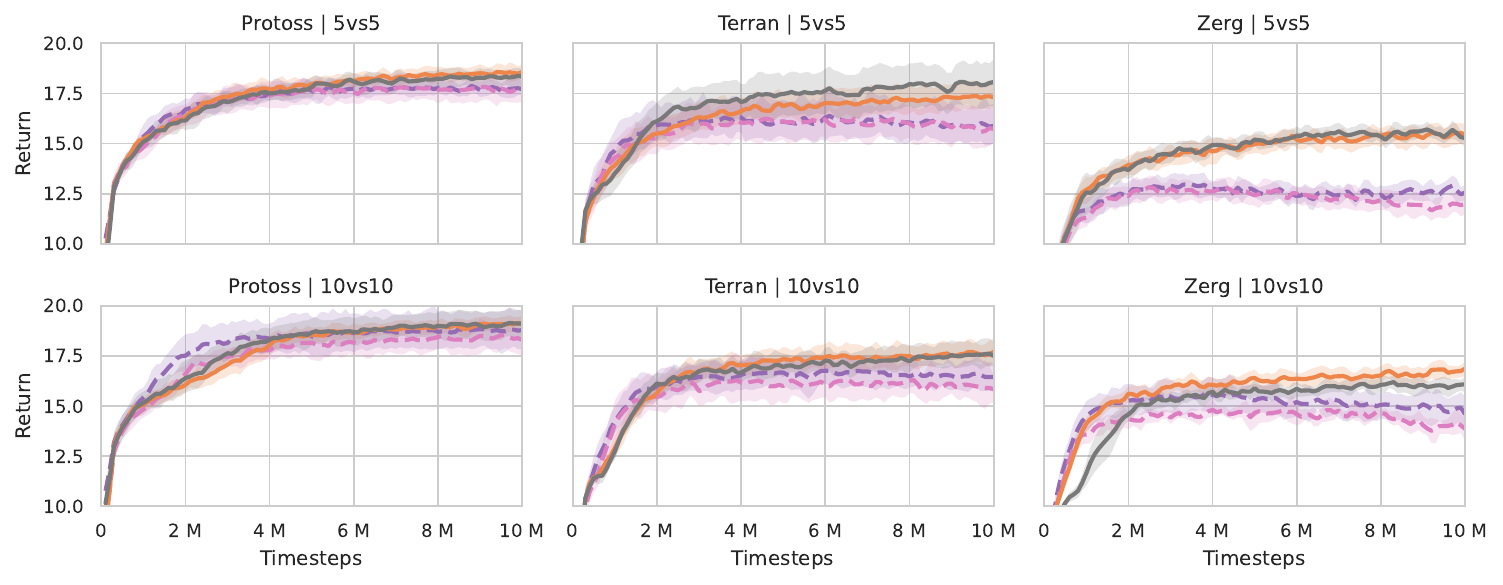}
    \caption{SMACv2 return mean ($5$ seeds).}
  \end{subfigure}

  \begin{subfigure}{0.44\linewidth}
    \centering
    \includegraphics[height=2.7cm]{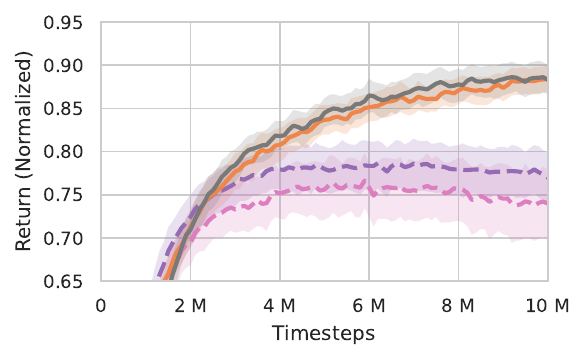}
    \caption{SMACv2 return mean aggregate ($30$ seeds).}
  \end{subfigure}
  \begin{subfigure}{0.55\linewidth}
    \includegraphics[height=2.7cm]{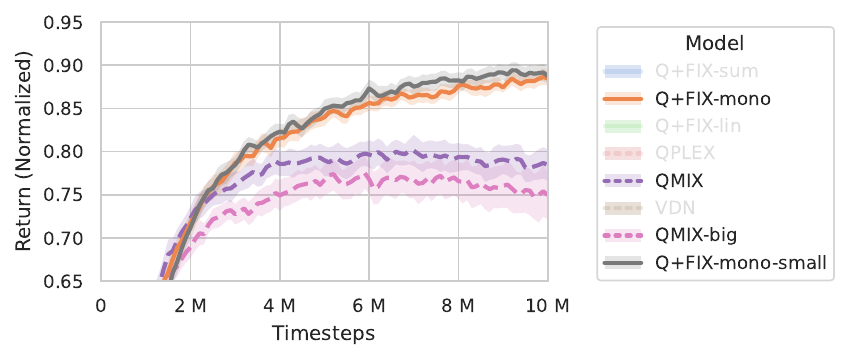}
    \caption{SMACv2 return IQM aggregate ($30$ seeds).}
  \end{subfigure}

  \caption{SMACv2 ablation results, bootstrapped $95\%$.
  Aggregation computed as in \cref{fig:results:smacv2:returns}.}
  \label{fig:results:smacv2:ablation}
\end{figure}

\paragraph{Probability of improvement}

\citet{agarwal_deep_2021} also suggest the use of \emph{probability of improvement} (POI) as a criterion for evaluation that is resilient to data outliers.
This metric measures the likelihood that a random run based on one method outperforms a random run based on another method, while ignoring the size of the performance gap.
If method $X$ has been evaluated empirically $N$ times with performances $\hat X = \{\hat x_i \}_{i=1}^N$, and method $Y$ has been evaluated empirically $M$ times with performances $\hat Y = \{\hat y_i \}_{i=1}^M$, we estimate the POI as
\begin{equation}
  \Pr(X > Y) \approx \frac{1}{N \cdot M} \sum_{\hat x\in \hat X, \hat y\in\hat Y} \Ind\left[ \hat x > \hat y \right] \,.
\end{equation}
In their work, \citet{agarwal_deep_2021} demonstrate this criterion assuming that each run is summarized by a single scalar (e.g., final performance); since we are both concerned with learning speed and are uncertain how to fairly pick a single scalar performance for each run, we instead perform this calculation over the entire learning phase.

\cref{fig:results:smacv2:poi} contains our aggregate POI results for SMACv2.
\citet{agarwal_deep_2021} note that a POI that is above $50\%$ with its entire CI indicates a statistically significant result; out of all methods, Q+FIX-sum is the only one to achieve this against all other methods.

\begin{figure}
  \centering

  \begin{subfigure}{0.49\linewidth}
    \centering
    \includegraphics[width=\linewidth]{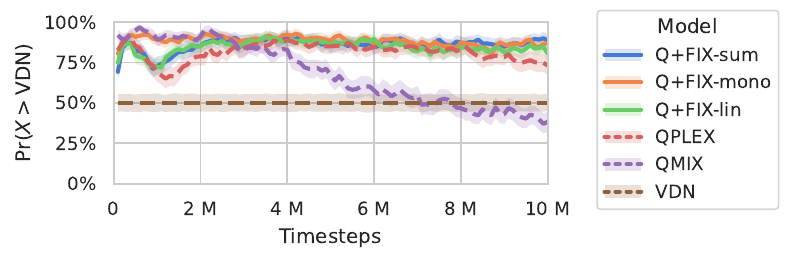}
    \caption{POI of model ``X'' over VDN.}
  \end{subfigure}
  \begin{subfigure}{0.49\linewidth}
    \centering
    \includegraphics[width=\linewidth]{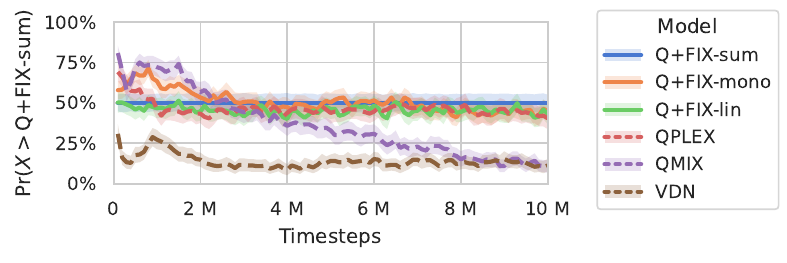}
    \caption{POI of model ``X'' over Q+FIX-sum.}
  \end{subfigure}

  \begin{subfigure}{0.49\linewidth}
    \centering
    \includegraphics[width=\linewidth]{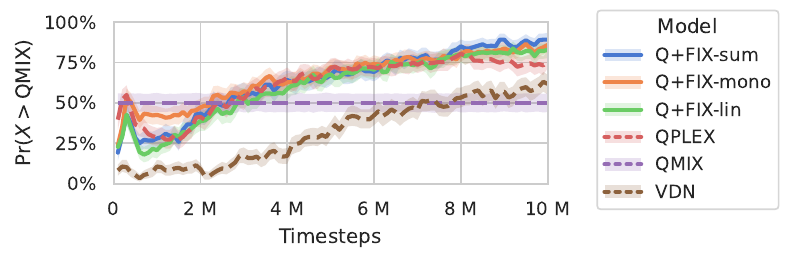}
    \caption{POI of model ``X'' over QMIX.}
  \end{subfigure}
  \begin{subfigure}{0.49\linewidth}
    \centering
    \includegraphics[width=\linewidth]{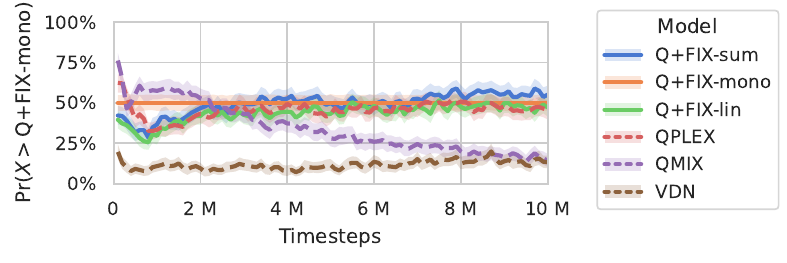}
    \caption{POI of model ``X'' over Q+FIX-mono.}
  \end{subfigure}

  \begin{subfigure}{0.49\linewidth}
    \centering
    \includegraphics[width=\linewidth]{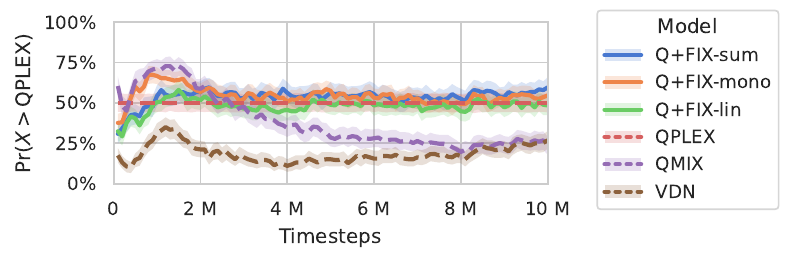}
    \caption{POI of model ``X'' over QPLEX.}
  \end{subfigure}
  \begin{subfigure}{0.49\linewidth}
    \centering
    \includegraphics[width=\linewidth]{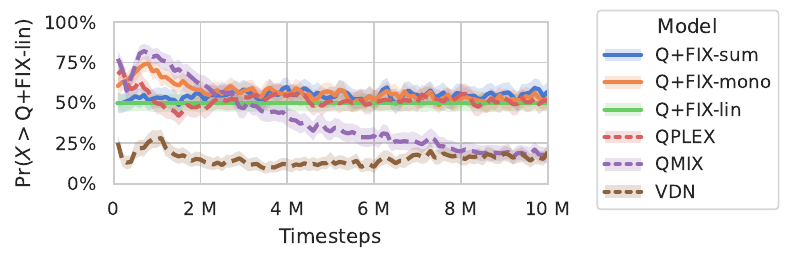}
    \caption{POI of model ``X'' over Q+FIX-lin.}
  \end{subfigure}

  \caption{Aggregate probability of improvement (POI), bootstrapped 95\% CI.}
  \label{fig:results:smacv2:poi}
\end{figure}

\subsection{Overcooked}
\label{sec:additional-results:overcooked}

\paragraph{Observability}

Overcooked is a fully observable environment, with each agent receiving observations whose information content is equivalent to the state.
Therefore, the challenge of these tasks is primarily one of coordination and subtask assignment over information gathering.
The state is provided as a tensor with shape $H\times W \times C$, with $C=26$ (mostly but not exclusively binary) channels encoding agent positions and orientations, and positions of tables, pots, plates, various ingredients, etc.

\paragraph{Coordination}

Notably, the tasks in overcooked do not strictly require tight coordination between agents.
Though some tasks may need both agents to contribute in different ways to the same plate being completed, that cooperation is not under strict coordination requirements.
Though the agents may achieve higher efficiency and performance if they coordinate optimally, the tasks can be completed even if the agents act relatively independently.
We believe this can explain some of the results in our evaluation, especially in terms of the relatively good performance of methods like VDN that hardly enforce strong coordination.

\paragraph{Implementation details}

For these Overcooked experiments, we found it useful for Q+FIX to employ both \emph{advantage detaching} and \emph{intervention annealing} with $\lambda$ descending linearly from $1$ to $0$ over the first $500k$ timesteps (10\% of training).

\paragraph{Additional results}

\cref{fig:results:more-overcooked:returns} shows the results for all five evaluated layouts: \CrampedRoom, \AsymmetricAdvantages, \CoordinationRing, \ForcedCoordination, and \CounterCircuit.
The tasks are categorically different and not directly comparable, but there is a progression in difficulty from the first to the last.
\CoordinationRing\ is a simple task solved adequately by all methods.
Performances start to differentiate more strongly in the other layout.
Notably, QMIX has some trouble in these layouts, even compared to simpler methods like VDN.
We believe this may be due to the coordination properties of these layouts (as described previously in this section), which may benefit simpler methods like VDN and independent learners.
Nonetheless, in all scenarios, Q+FIX variants are the best performing, achieving statistically significant performance improvements compared to the baselines in four of the five layouts.

\begin{figure}
  \centering
  \includegraphics[width=\linewidth]{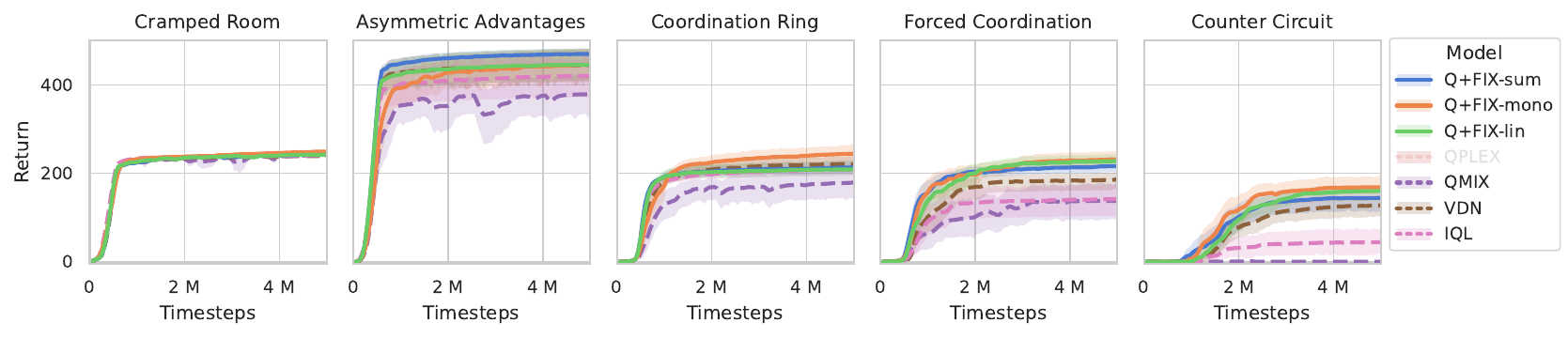}
  \caption{Overcooked return mean, bootstrapped $95\%$ CI ($20$ seeds).}
  \label{fig:results:more-overcooked:returns}
\end{figure}

\section{Architectures and hyperparameters}
\label{sec:architectures}

\subsection{SMACv2}
\label{sec:architectures:smacv2}

Baseline methods are used and run as implemented in the \pymarl\ repository\footnote{\url{https://github.com/benellis3/pymarl2}}, using the pre-optimized hyperparameters as provided by the corresponding configs.
Q+FIX methods are implemented to match the baseline implementations completely, with the only difference being the mixer type and architecture, and are run using the same hyperparameters as the baselines.
All implementations use the Adam optimizer~\cite{kingma_adam_2017}.

Due to their complex nature (including the use of hypernetworks and attention modules) we omit a full description of the mixing architectures for QMIX and QPLEX.
We refer the reader to the corresponding publications and \pymarl\ implementations\footnote{\url{https://github.com/benellis3/pymarl2/blob/master/src/modules/mixers/qmix.py}}\footnote{\url{https://github.com/benellis3/pymarl2/blob/master/src/modules/mixers/dmaq_general.py}}.

\paragraph{Agent Model $\qmodel_i(h_i, a_i)$}

All methods employ the same architecture to compute the individual utilities $\qmodel_i(h_i, a_i)$.
As SMAXv2 is partially-observable and provides observations directly as feature vectors, this architecture employs the following layers:
\begin{itemize}
    \item \textbf{Inputs:}
        \begin{itemize}
            \item Observation vector $\realset^d$ ($d$ variable per scenario).
            \item Agent ID one-hot encoding $\{0, 1\}^N$.
        \end{itemize}
    \item \textbf{Layers:}
        \begin{itemize}
            \item \texttt{Linear(output\_dim=64)} + \texttt{ReLU()}
            \item \texttt{GRUCell(output\_dim=64)}
            \item \texttt{Linear(output\_dim=\#actions)}
        \end{itemize}
\item \textbf{Output:} Action values $\realset^{|\aset_i|}$, one per action.
\end{itemize}

%
%
%
%
%
%
%
%


\paragraph{Q+FIX Weight Model $w(s, \ja)$}
\begin{itemize}
\item \textbf{Input:}
    \begin{itemize}
        \item State vector $\realset^d$ ($d$ variable per scenario).
        \item Agent actions one-hot encodings $\{0, 1\}^{\sum_i |\aset_i|}$.
    \end{itemize}
\item \textbf{Layers:}
    \begin{itemize}
        \item \texttt{Linear(output\_dim=64)} + \texttt{ReLU()}
        \item \texttt{Linear(output\_dim=1)} (if Q+FIX-\{sum,mono\}) \\
            \texttt{Linear(output\_dim=N)} (if Q+FIX-lin)
        \item \texttt{lambda w: |w+1|-1+10e-8}
    \end{itemize}
\item \textbf{Outputs:} Weights $w(s, \ja) \in\realset_{>-1}$ (if Q+FIX-\{sum,mono\}) \\
    \phantom{\textbf{Outputs:}} Weights $w(s, \ja) \in\realset_{>-1}^N$ (if Q+FIX-lin).
\end{itemize}

\paragraph{Q+FIX Bias Model $b(s)$}
\begin{itemize}
\item \textbf{Input:} State vector $\realset^d$ ($d$ variable per scenario).
\item \textbf{Layers:}
    \begin{itemize}
        \item \texttt{Linear(output\_dim=64)} + \texttt{ReLU()}
        \item \texttt{Linear(output\_dim=1)}
    \end{itemize}
\item \textbf{Output:} Bias $b(s) \in\realset$.
\end{itemize}


\subsection{Overcooked}
\label{sec:architectures:overcooked}

Baseline methods are used and run as implemented in the \jaxmarl\ repository\footnote{\url{https://github.com/FLAIROx/JaxMARL}}, using the pre-optimized hyperparameters as provided by the corresponding configs.
Q+FIX methods are implemented to match the baseline implementations completely, with the only difference being the mixer type and architecture, and are run using the same hyperparameters as the baselines.
All implementations use the \emph{rectified} Adam (RAdam) optimizer~\cite{liu_variance_2019}.

\paragraph{Agent Model $\qmodel_i(h_i, a_i)$}

All methods employ the same architecture to compute the individual utilities $\qmodel_i(h_i, a_i)$.
As Overcooked is fully-observable and provides states as a grid (tensor) of categorical data, this architecture employs the following layers:
\begin{itemize}
\item \textbf{Input:} State grid $\naturalset^{H\times W\times C}$ ($C=26$ channels, mostly binary).
\item \textbf{Layers:}
    \begin{itemize}
        \item \texttt{Conv(output\_dim=32, kernel\_size=(5, 5))} + \texttt{ReLU()}
        \item \texttt{Conv(output\_dim=32, kernel\_size=(3, 3))} + \texttt{ReLU()}
        \item \texttt{Conv(output\_dim=32, kernel\_size=(3, 3))} + \texttt{ReLU()} + \texttt{Flatten()}
        \item \texttt{Linear(output\_dim=64)} + \texttt{ReLU()}
        \item \texttt{Linear(output\_dim=64)} + \texttt{ReLU()}
        \item \texttt{Linear(output\_dim=\#actions)}
    \end{itemize}
\item \textbf{Output:} Action values $\realset^{|\aset_i|}$, one per action.
\end{itemize}

\paragraph{Q+FIX Weight Models $w(s, \ja)$}
\begin{itemize}
\item \textbf{Input:} 
    \begin{itemize}
        \item State grid $\naturalset^{H\times W\times C}$ ($C=26$ channels, mostly binary).
        \item Agent actions one-hot encodings $\{0, 1\}^{\sum_i |\aset_i|}$.
    \end{itemize}
\item \textbf{Layers:}
    \begin{itemize}
        \item \texttt{Conv(output\_dim=64, kernel\_size=(5, 5))} + \texttt{ReLU()}
        \item \texttt{Conv(output\_dim=64, kernel\_size=(3, 3))} + \texttt{ReLU()} + \texttt{Flatten()}
        \item \texttt{Linear(output\_dim=64)} + \texttt{ReLU()}
        \item \texttt{Linear(output\_dim=1)} (if Q+FIX-\{sum,mono\}) \\
            \texttt{Linear(output\_dim=N)} (if Q+FIX-lin)
        \item \texttt{lambda w: |w+1|-1+10e-8}
    \end{itemize}
\item \textbf{Outputs:} Weights $w(s, \ja) \in\realset_{>-1}$ (if Q+FIX-\{sum,mono\}) \\
    \phantom{\textbf{Outputs:}} Weights $w(s, \ja) \in\realset_{>-1}^N$ (if Q+FIX-lin).
\end{itemize}

\paragraph{Q+FIX Bias Model $b(s)$}
\begin{itemize}
\item \textbf{Input:} State grid $\naturalset^{H\times W\times C}$ ($C=26$ channels, mostly binary).
\item \textbf{Layers:}
    \begin{itemize}
        \item \texttt{Linear(output\_dim=64)} + \texttt{ReLU()}
        \item \texttt{Linear(output\_dim=1)}
    \end{itemize}
\item \textbf{Output:} Bias $b(s) \in\realset$.
\end{itemize}



\section{Experiments compute resources}
\label{sec:resources}

Experiments were distributed (unevenly) primarily across two workstations:
\begin{itemize}
\item \textbf{Type:} Standalone workstation, \\
      \textbf{CPU:} Intel(R) Core(TM) i7-7700K CPU @ 4.20GHz, \\
      \textbf{GPU(s):} 2x NVIDIA GeForce GTX 1080.
\item \textbf{Type:} Standalone workstation, \\
      \textbf{CPU:} AMD Ryzen Threadripper 7960X 24-Cores, \\
      \textbf{GPU(s):} 1x NVIDIA GeForce RTX 4090.
\end{itemize}

The time of executing a single run can differ greatly depending on the workstation, the environment, the method, and model size.
The following is only a very rough estimate of total sequential runtime:
\begin{description}
    \item[SMACv2] \pymarl\ implementations can be very slow due to the CPU-bound environment, and vary somewhere between $\SI{5}{\hour}$ and $\SI{20}{\hour}$ per run.
        For the main results, since we execute $6$ methods for $5$ runs in $9$ scenarios, which amounts to $6 \cdot 5 \cdot 9 = 270$ independent runs and roughly $270 \cdot \SI{12}{\hour} = \SI{135}{\day}$ of sequential runtime.
        For the ablation results, we execute an additional $2$ methods for $5$ runs in $6$ scenarios, which amounts to $2 \cdot 5 \cdot 6 = 60$ independent runs and roughly $60 \cdot \SI{12}{\hour} = \SI{30}{\day}$ of sequential runtime.
    \item[Overcooked] \jaxmarl\ implementations are much faster, and vary between $\SI{15}{\minute}$ and $\SI{60}{\minute}$.
        Since we execute $6$ methods for $20$ runs in $5$ layouts, which amounts to $6 \cdot 20 \cdot 5 = 600$ independent runs and roughly $600 \cdot \SI{40}{\minute} = \SI{24000}{\minute} \approx \SI{16}{\day}$ of sequential runtime.
\end{description}

Naturally, the experiments were not executed purely sequentially; however, they still took multiple weeks to complete as a whole, on our available hardware.


\newpage
\section*{NeurIPS Paper Checklist}

\begin{enumerate}

\item {\bf Claims}
    \item[] Question: Do the main claims made in the abstract and introduction accurately reflect the paper's contributions and scope?
    \item[] Answer: \answerYes{} 
    \item[] Justification: The primary contributions (derivation of a simple IGM value decomposition framework, of the QFIX family, and the corresponding evaluation) are all discussed in the main document.
    \item[] Guidelines:
    \begin{itemize}
        \item The answer NA means that the abstract and introduction do not include the claims made in the paper.
        \item The abstract and/or introduction should clearly state the claims made, including the contributions made in the paper and important assumptions and limitations. A No or NA answer to this question will not be perceived well by the reviewers. 
        \item The claims made should match theoretical and experimental results, and reflect how much the results can be expected to generalize to other settings. 
        \item It is fine to include aspirational goals as motivation as long as it is clear that these goals are not attained by the paper. 
    \end{itemize}

\item {\bf Limitations}
    \item[] Question: Does the paper discuss the limitations of the work performed by the authors?
    \item[] Answer: \answerYes{} 
    \item[] Justification: Limitations are mentioned in the conclusions section.
    \item[] Guidelines:
    \begin{itemize}
        \item The answer NA means that the paper has no limitation while the answer No means that the paper has limitations, but those are not discussed in the paper. 
        \item The authors are encouraged to create a separate "Limitations" section in their paper.
        \item The paper should point out any strong assumptions and how robust the results are to violations of these assumptions (e.g., independence assumptions, noiseless settings, model well-specification, asymptotic approximations only holding locally). The authors should reflect on how these assumptions might be violated in practice and what the implications would be.
        \item The authors should reflect on the scope of the claims made, e.g., if the approach was only tested on a few datasets or with a few runs. In general, empirical results often depend on implicit assumptions, which should be articulated.
        \item The authors should reflect on the factors that influence the performance of the approach. For example, a facial recognition algorithm may perform poorly when image resolution is low or images are taken in low lighting. Or a speech-to-text system might not be used reliably to provide closed captions for online lectures because it fails to handle technical jargon.
        \item The authors should discuss the computational efficiency of the proposed algorithms and how they scale with dataset size.
        \item If applicable, the authors should discuss possible limitations of their approach to address problems of privacy and fairness.
        \item While the authors might fear that complete honesty about limitations might be used by reviewers as grounds for rejection, a worse outcome might be that reviewers discover limitations that aren't acknowledged in the paper. The authors should use their best judgment and recognize that individual actions in favor of transparency play an important role in developing norms that preserve the integrity of the community. Reviewers will be specifically instructed to not penalize honesty concerning limitations.
    \end{itemize}

\item {\bf Theory assumptions and proofs}
    \item[] Question: For each theoretical result, does the paper provide the full set of assumptions and a complete (and correct) proof?
    \item[] Answer: \answerYes{}
    \item[] Justification: Every novel proposition/theorem is clearly indicated as such, with clearly stated assumptions, and with a corresponding proof in the appendix.
    Proof sketches were omitted in the main document due to the proofs being strictly technical, and space limitations.
    \item[] Guidelines:
    \begin{itemize}
        \item The answer NA means that the paper does not include theoretical results. 
        \item All the theorems, formulas, and proofs in the paper should be numbered and cross-referenced.
        \item All assumptions should be clearly stated or referenced in the statement of any theorems.
        \item The proofs can either appear in the main paper or the supplemental material, but if they appear in the supplemental material, the authors are encouraged to provide a short proof sketch to provide intuition. 
        \item Inversely, any informal proof provided in the core of the paper should be complemented by formal proofs provided in appendix or supplemental material.
        \item Theorems and Lemmas that the proof relies upon should be properly referenced. 
    \end{itemize}

    \item {\bf Experimental result reproducibility}
    \item[] Question: Does the paper fully disclose all the information needed to reproduce the main experimental results of the paper to the extent that it affects the main claims and/or conclusions of the paper (regardless of whether the code and data are provided or not)?
    \item[] Answer: \answerYes{}
    \item[] Justification: We mention important implementation details such as the \emph{detaching of advantages} and the \emph{annealing of the intervention} in the main submission, relevant model descriptions in the appendix, and will link to the corresponding code bases with instructions to run the experiments for the camera ready.
    \item[] Guidelines:
    \begin{itemize}
        \item The answer NA means that the paper does not include experiments.
        \item If the paper includes experiments, a No answer to this question will not be perceived well by the reviewers: Making the paper reproducible is important, regardless of whether the code and data are provided or not.
        \item If the contribution is a dataset and/or model, the authors should describe the steps taken to make their results reproducible or verifiable. 
        \item Depending on the contribution, reproducibility can be accomplished in various ways. For example, if the contribution is a novel architecture, describing the architecture fully might suffice, or if the contribution is a specific model and empirical evaluation, it may be necessary to either make it possible for others to replicate the model with the same dataset, or provide access to the model. In general. releasing code and data is often one good way to accomplish this, but reproducibility can also be provided via detailed instructions for how to replicate the results, access to a hosted model (e.g., in the case of a large language model), releasing of a model checkpoint, or other means that are appropriate to the research performed.
        \item While NeurIPS does not require releasing code, the conference does require all submissions to provide some reasonable avenue for reproducibility, which may depend on the nature of the contribution. For example
        \begin{enumerate}
            \item If the contribution is primarily a new algorithm, the paper should make it clear how to reproduce that algorithm.
            \item If the contribution is primarily a new model architecture, the paper should describe the architecture clearly and fully.
            \item If the contribution is a new model (e.g., a large language model), then there should either be a way to access this model for reproducing the results or a way to reproduce the model (e.g., with an open-source dataset or instructions for how to construct the dataset).
            \item We recognize that reproducibility may be tricky in some cases, in which case authors are welcome to describe the particular way they provide for reproducibility. In the case of closed-source models, it may be that access to the model is limited in some way (e.g., to registered users), but it should be possible for other researchers to have some path to reproducing or verifying the results.
        \end{enumerate}
    \end{itemize}

\item {\bf Open access to data and code}
    \item[] Question: Does the paper provide open access to the data and code, with sufficient instructions to faithfully reproduce the main experimental results, as described in supplemental material?
    \item[] Answer: \answerYes{}
    \item[] Justification: Anonymized code is provided for both evaluation frameworks, \pymarl\ (\url{https://anonymous.4open.science/r/pymarl2-C5EF/}) and \jaxmarl\ (\url{https://anonymous.4open.science/r/JaxMARL-682A/}).
    Instructions for the \pymarl\ implementation are provided in the readme.
    Instructions for the \jaxmarl\ implementation follow those from the original repo.
    \item[] Guidelines:
    \begin{itemize}
        \item The answer NA means that paper does not include experiments requiring code.
        \item Please see the NeurIPS code and data submission guidelines (\url{https://nips.cc/public/guides/CodeSubmissionPolicy}) for more details.
        \item While we encourage the release of code and data, we understand that this might not be possible, so “No” is an acceptable answer. Papers cannot be rejected simply for not including code, unless this is central to the contribution (e.g., for a new open-source benchmark).
        \item The instructions should contain the exact command and environment needed to run to reproduce the results. See the NeurIPS code and data submission guidelines (\url{https://nips.cc/public/guides/CodeSubmissionPolicy}) for more details.
        \item The authors should provide instructions on data access and preparation, including how to access the raw data, preprocessed data, intermediate data, and generated data, etc.
        \item The authors should provide scripts to reproduce all experimental results for the new proposed method and baselines. If only a subset of experiments are reproducible, they should state which ones are omitted from the script and why.
        \item At submission time, to preserve anonymity, the authors should release anonymized versions (if applicable).
        \item Providing as much information as possible in supplemental material (appended to the paper) is recommended, but including URLs to data and code is permitted.
    \end{itemize}

\item {\bf Experimental setting/details}
    \item[] Question: Does the paper specify all the training and test details (e.g., data splits, hyperparameters, how they were chosen, type of optimizer, etc.) necessary to understand the results?
    \item[] Answer: \answerYes{}
    \item[] Justification: As the experiments are based on well established frameworks \pymarl\ and \jaxmarl, most experimental settings and details are provided by the corresponding codebases.
    Any additional component (e.g., the architectures of Q+FIX) is both described in the appendix, provided as supplementary material, and will be linked in the camera ready).
    \item[] Guidelines:
    \begin{itemize}
        \item The answer NA means that the paper does not include experiments.
        \item The experimental setting should be presented in the core of the paper to a level of detail that is necessary to appreciate the results and make sense of them.
        \item The full details can be provided either with the code, in appendix, or as supplemental material.
    \end{itemize}

\item {\bf Experiment statistical significance}
    \item[] Question: Does the paper report error bars suitably and correctly defined or other appropriate information about the statistical significance of the experiments?
    \item[] Answer: \answerYes{}
    \item[] Justification: All results are shown with clearly-stated bootstrapped $95\%$ confidence interval.
    \item[] Guidelines:
    \begin{itemize}
        \item The answer NA means that the paper does not include experiments.
        \item The authors should answer "Yes" if the results are accompanied by error bars, confidence intervals, or statistical significance tests, at least for the experiments that support the main claims of the paper.
        \item The factors of variability that the error bars are capturing should be clearly stated (for example, train/test split, initialization, random drawing of some parameter, or overall run with given experimental conditions).
        \item The method for calculating the error bars should be explained (closed form formula, call to a library function, bootstrap, etc.)
        \item The assumptions made should be given (e.g., Normally distributed errors).
        \item It should be clear whether the error bar is the standard deviation or the standard error of the mean.
        \item It is OK to report 1-sigma error bars, but one should state it. The authors should preferably report a 2-sigma error bar than state that they have a 96\% CI, if the hypothesis of Normality of errors is not verified.
        \item For asymmetric distributions, the authors should be careful not to show in tables or figures symmetric error bars that would yield results that are out of range (e.g. negative error rates).
        \item If error bars are reported in tables or plots, The authors should explain in the text how they were calculated and reference the corresponding figures or tables in the text.
    \end{itemize}

\item {\bf Experiments compute resources}
    \item[] Question: For each experiment, does the paper provide sufficient information on the computer resources (type of compute workers, memory, time of execution) needed to reproduce the experiments?
    \item[] Answer: \answerYes{}
    \item[] Justification: The appendix contains a section describing the hardware used to run the experiments, and rough calculations for the total time of sequential execution of all experiments.
    \item[] Guidelines:
    \begin{itemize}
        \item The answer NA means that the paper does not include experiments.
        \item The paper should indicate the type of compute workers CPU or GPU, internal cluster, or cloud provider, including relevant memory and storage.
        \item The paper should provide the amount of compute required for each of the individual experimental runs as well as estimate the total compute. 
        \item The paper should disclose whether the full research project required more compute than the experiments reported in the paper (e.g., preliminary or failed experiments that didn't make it into the paper). 
    \end{itemize}
    
\item {\bf Code of ethics}
    \item[] Question: Does the research conducted in the paper conform, in every respect, with the NeurIPS Code of Ethics \url{https://neurips.cc/public/EthicsGuidelines}?
    \item[] Answer: \answerYes{}
    \item[] Justification: The work does not violate any of the guidelines outlines in the Code of Ethics.
    \item[] Guidelines:
    \begin{itemize}
        \item The answer NA means that the authors have not reviewed the NeurIPS Code of Ethics.
        \item If the authors answer No, they should explain the special circumstances that require a deviation from the Code of Ethics.
        \item The authors should make sure to preserve anonymity (e.g., if there is a special consideration due to laws or regulations in their jurisdiction).
    \end{itemize}

\item {\bf Broader impacts}
    \item[] Question: Does the paper discuss both potential positive societal impacts and negative societal impacts of the work performed?
    \item[] Answer: \answerNA{}
    \item[] Justification: The work develops general-purpose foundational methods not intrinsically tied to particular applications or deployments.
    \item[] Guidelines:
    \begin{itemize}
        \item The answer NA means that there is no societal impact of the work performed.
        \item If the authors answer NA or No, they should explain why their work has no societal impact or why the paper does not address societal impact.
        \item Examples of negative societal impacts include potential malicious or unintended uses (e.g., disinformation, generating fake profiles, surveillance), fairness considerations (e.g., deployment of technologies that could make decisions that unfairly impact specific groups), privacy considerations, and security considerations.
        \item The conference expects that many papers will be foundational research and not tied to particular applications, let alone deployments. However, if there is a direct path to any negative applications, the authors should point it out. For example, it is legitimate to point out that an improvement in the quality of generative models could be used to generate deepfakes for disinformation. On the other hand, it is not needed to point out that a generic algorithm for optimizing neural networks could enable people to train models that generate Deepfakes faster.
        \item The authors should consider possible harms that could arise when the technology is being used as intended and functioning correctly, harms that could arise when the technology is being used as intended but gives incorrect results, and harms following from (intentional or unintentional) misuse of the technology.
        \item If there are negative societal impacts, the authors could also discuss possible mitigation strategies (e.g., gated release of models, providing defenses in addition to attacks, mechanisms for monitoring misuse, mechanisms to monitor how a system learns from feedback over time, improving the efficiency and accessibility of ML).
    \end{itemize}
    
\item {\bf Safeguards}
    \item[] Question: Does the paper describe safeguards that have been put in place for responsible release of data or models that have a high risk for misuse (e.g., pretrained language models, image generators, or scraped datasets)?
    \item[] Answer: \answerNA{}
    \item[] Justification: The paper poses no such risks.
    \item[] Guidelines:
    \begin{itemize}
        \item The answer NA means that the paper poses no such risks.
        \item Released models that have a high risk for misuse or dual-use should be released with necessary safeguards to allow for controlled use of the model, for example by requiring that users adhere to usage guidelines or restrictions to access the model or implementing safety filters. 
        \item Datasets that have been scraped from the Internet could pose safety risks. The authors should describe how they avoided releasing unsafe images.
        \item We recognize that providing effective safeguards is challenging, and many papers do not require this, but we encourage authors to take this into account and make a best faith effort.
    \end{itemize}

\item {\bf Licenses for existing assets}
    \item[] Question: Are the creators or original owners of assets (e.g., code, data, models), used in the paper, properly credited and are the license and terms of use explicitly mentioned and properly respected?
    \item[] Answer: \answerYes{}
    \item[] Justification: The paper's experiments are based on two well-known open-source framework, \pymarl\ and \jaxmarl, both of which are provided with the permissive Apache License 2.0.
    Our own implementations continue to use the same license.
    \item[] Guidelines:
    \begin{itemize}
        \item The answer NA means that the paper does not use existing assets.
        \item The authors should cite the original paper that produced the code package or dataset.
        \item The authors should state which version of the asset is used and, if possible, include a URL.
        \item The name of the license (e.g., CC-BY 4.0) should be included for each asset.
        \item For scraped data from a particular source (e.g., website), the copyright and terms of service of that source should be provided.
        \item If assets are released, the license, copyright information, and terms of use in the package should be provided. For popular datasets, \url{paperswithcode.com/datasets} has curated licenses for some datasets. Their licensing guide can help determine the license of a dataset.
        \item For existing datasets that are re-packaged, both the original license and the license of the derived asset (if it has changed) should be provided.
        \item If this information is not available online, the authors are encouraged to reach out to the asset's creators.
    \end{itemize}

\item {\bf New assets}
    \item[] Question: Are new assets introduced in the paper well documented and is the documentation provided alongside the assets?
    \item[] Answer: \answerYes{}
    \item[] Justification: The paper provides implementations of Q+FIX based on two well-known multi-agent RL frameworks: \pymarl\ and \jaxmarl.
    These will be provided as forks from the corresponding repositories.
    \item[] Guidelines:
    \begin{itemize}
        \item The answer NA means that the paper does not release new assets.
        \item Researchers should communicate the details of the dataset/code/model as part of their submissions via structured templates. This includes details about training, license, limitations, etc. 
        \item The paper should discuss whether and how consent was obtained from people whose asset is used.
        \item At submission time, remember to anonymize your assets (if applicable). You can either create an anonymized URL or include an anonymized zip file.
    \end{itemize}

\item {\bf Crowdsourcing and research with human subjects}
    \item[] Question: For crowdsourcing experiments and research with human subjects, does the paper include the full text of instructions given to participants and screenshots, if applicable, as well as details about compensation (if any)? 
    \item[] Answer: \answerNA{}
    \item[] Justification: The paper does not involve crowdsourcing nor research with human subjects.
    \item[] Guidelines:
    \begin{itemize}
        \item The answer NA means that the paper does not involve crowdsourcing nor research with human subjects.
        \item Including this information in the supplemental material is fine, but if the main contribution of the paper involves human subjects, then as much detail as possible should be included in the main paper. 
        \item According to the NeurIPS Code of Ethics, workers involved in data collection, curation, or other labor should be paid at least the minimum wage in the country of the data collector. 
    \end{itemize}

\item {\bf Institutional review board (IRB) approvals or equivalent for research with human subjects}
    \item[] Question: Does the paper describe potential risks incurred by study participants, whether such risks were disclosed to the subjects, and whether Institutional Review Board (IRB) approvals (or an equivalent approval/review based on the requirements of your country or institution) were obtained?
    \item[] Answer: \answerNA{}
    \item[] Justification: The paper does not involve crowdsourcing nor research with human subjects.
    \item[] Guidelines:
    \begin{itemize}
        \item The answer NA means that the paper does not involve crowdsourcing nor research with human subjects.
        \item Depending on the country in which research is conducted, IRB approval (or equivalent) may be required for any human subjects research. If you obtained IRB approval, you should clearly state this in the paper. 
        \item We recognize that the procedures for this may vary significantly between institutions and locations, and we expect authors to adhere to the NeurIPS Code of Ethics and the guidelines for their institution. 
        \item For initial submissions, do not include any information that would break anonymity (if applicable), such as the institution conducting the review.
    \end{itemize}

\item {\bf Declaration of LLM usage}
    \item[] Question: Does the paper describe the usage of LLMs if it is an important, original, or non-standard component of the core methods in this research? Note that if the LLM is used only for writing, editing, or formatting purposes and does not impact the core methodology, scientific rigorousness, or originality of the research, declaration is not required.
    \item[] Answer: \answerNA{}
    \item[] Justification: The core method development in this research does not involve LLMs as any important, original, or non-standard components.
    \item[] Guidelines:
    \begin{itemize}
        \item The answer NA means that the core method development in this research does not involve LLMs as any important, original, or non-standard components.
        \item Please refer to our LLM policy (\url{https://neurips.cc/Conferences/2025/LLM}) for what should or should not be described.
    \end{itemize}

\end{enumerate}

\end{document}